\documentclass{article}
\usepackage{fontawesome5}
\usepackage{microtype}
\usepackage{graphicx}
\usepackage{subcaption}
\usepackage{booktabs} 
\usepackage{multirow}
\usepackage{hyperref}
\usepackage[accepted]{icml2026} 
\usepackage{makecell}
\usepackage{graphicx} 
\usepackage{placeins}
\usepackage{wrapfig}
\usepackage{url}
\usepackage{tabularx}

\providecommand{\Lcal}{\mathcal{L}}
\usepackage{amsmath,amssymb,amsfonts}
\usepackage{xcolor}

\usepackage{mathtools}

\providecommand{\Lcal}{\mathcal{L}}

\usepackage{graphicx}

\makeatletter
\newcommand{\maybeinclude}[1]{
  \begingroup
  \edef\MI@file{#1}
  \IfFileExists{\MI@file}{
    \includegraphics[width=\linewidth]{\MI@file}
  }{
    \fbox{
      \parbox[c][\panelH][c]{\linewidth}{%
  \centering
  \footnotesize\itshape Placeholder (file not found)\\[-1mm]
  \scriptsize\path{\tempfname}%
}
    }
  }
  \endgroup
}
\makeatother
\usepackage{xurl}

\usepackage{xcolor}
\definecolor{LaneAct}{RGB}{235,242,250}   % soft blue-gray
\definecolor{LaneKer}{RGB}{252,245,226}   % soft amber
\definecolor{LaneSt}{RGB}{235,247,238}    % soft green
\definecolor{NodeStroke}{RGB}{120,120,120}   % soft gray outline
\definecolor{BoxFill}{RGB}{255,255,255}      % white boxes
\definecolor{KernelFill}{RGB}{245,248,255}   % very light blue for kernels
\definecolor{OpFill}{RGB}{255,255,255}       % white operator circles
\definecolor{TitleColor}{RGB}{35,35,35}      % near-black titles
\definecolor{TagColor}{RGB}{95,95,95}        % gray annotation text
\newcommand{\wandbidx}[2]{%
  \href{https://wandb.ai/xliang2-california-institute-of-technology-caltech/M+Adam/runs/#2}{#1}%
}

\usepackage{amsmath}
\usepackage{amssymb}
\usepackage{mathtools}
\usepackage{amsthm}

\usepackage[capitalize,noabbrev]{cleveref}

\theoremstyle{plain}
\newtheorem{theorem}{Theorem}[section]

\theoremstyle{definition}

\theoremstyle{remark}

\usepackage{xspace}
\newcommand{\method}{M{+}Adam\xspace}
\newcommand{\additive}{Additive\xspace}
\newcommand{\multiplicative}{Multiplicative\xspace}
\newcommand{\combined}{M{+}Adam\xspace}

\usepackage[textsize=tiny]{todonotes}

\usepackage{tikz,array}
\usetikzlibrary{calc}
\usepackage{float}
\usepackage{caption}        % for \captionof
\usepackage{cuted}         % for \begin{strip} ... \end{strip} (two-column wide, non-floating)

\usepackage{xcolor}
\definecolor{MBlue}{HTML}{6BB6E9}
\definecolor{MTeal}{HTML}{55C5B2}
\definecolor{MOrange}{HTML}{F4B56A}
\definecolor{MGray}{gray}{0.15}
\usetikzlibrary{arrows.meta, positioning}
\colorlet{adam}{blue!70!black}
\colorlet{madam}{orange!80!black}
\colorlet{combo}{teal!70!black}
\colorlet{rounding}{black!55}
\definecolor{FillData}{RGB}{228,229,252}   % very light cool gray
\definecolor{FillParam}{RGB}{249,248,255}  % very light warm gray
\definecolor{FillKernel}{RGB}{204,208,255} % very light blue
\definecolor{FillOp}{RGB}{255,255,255}     % white

\newcommand{\FPX}{0.24cm}   % horizontal unit per bit (↑ to lengthen bars)
\newcommand{\FPY}{0.18cm}   % bar height (↓ to reduce vertical size)
\newcommand{\fpbarTight}[2]{%
  \begin{tikzpicture}[x=\FPX,y=\FPY,baseline={(current bounding box.center)}]
    \pgfmathtruncatemacro{\E}{#1}%
    \pgfmathtruncatemacro{\M}{#2}%
    \pgfmathtruncatemacro{\TOT}{1+\E+\M}%
    \draw[rounded corners=1pt] (0,0) rectangle (\TOT,1);
    \fill[black!10] (0,0) rectangle (1,1);        % sign
    \fill[blue!15]  (1,0) rectangle (1+\E,1);     % exponent
    \ifnum\M>0
      \fill[green!15] (1+\E,0) rectangle (\TOT,1);
    \fi
    \draw (1,0)--(1,1); % S|E
    \ifnum\M>0
      \draw (1+\E,0)--(1+\E,1); % E|M only when mantissa exists
    \fi
    \node[font=\tiny] at (0.5,0.5) {S};
    \node[font=\tiny] at (1+0.5*\E,0.5) {E};
    \ifnum\M>0
      \node[font=\tiny] at (1+\E+0.5*\M,0.5) {M};
    \fi
  \end{tikzpicture}%
}
\newcommand{\axisrow}[1]{%
  \draw[very thick] (0,#1) -- (12.2,#1);
  \foreach \x in {0,1,2,3.2,5,8,12.2} {\draw[thick] (\x,#1+0.16)--(\x,#1-0.16);}
  \draw[very thick] (0,#1-0.30)--(0,#1+0.30);
  \draw[very thick] (12.2,#1-0.30)--(12.2,#1+0.30);
}

\newcommand{\panelH}{0.188\textheight} % <-- tune this
\renewcommand{\arraystretch}{1.0}
\newcommand{\MaybeBoxH}{\panelH} % placeholder height follows \panelH by default

\makeatletter
\@ifundefined{maybeinclude}{%
  \newcommand{\maybeinclude}[2][]{%
    \begingroup
    \edef\tempfname{#2}%
    \setlength{\fboxsep}{0pt}%
    \IfFileExists{\tempfname}{%
      \includegraphics[#1]{\tempfname}%
    }{%
      \fcolorbox{black!20}{white}{%
        \parbox[c][\MaybeBoxH][c]{\linewidth}{%
          \centering
          \footnotesize\itshape Placeholder (file not found)\\[-1mm]
          \scriptsize\path{\tempfname}%
        }%
      }%
    }%
    \endgroup
  }%
}{%
  \renewcommand{\maybeinclude}[2][]{%
    \begingroup
    \edef\tempfname{#2}%
    \setlength{\fboxsep}{0pt}%
    \IfFileExists{\tempfname}{%
      \includegraphics[#1]{\tempfname}%
    }{%
      \fcolorbox{black!20}{white}{%
        \parbox[c][\MaybeBoxH][c]{\linewidth}{%
          \centering
          \footnotesize\itshape Placeholder (file not found)\\[-1mm]
          \scriptsize\path{\tempfname}%
        }%  
      }%
    }%
    \endgroup
  }%
}%d
\makeatother
\icmltitlerunning{M+Adam: Low-Precision Training via Additive–Multiplicative Optimization}

\begin{document}
\usetikzlibrary{calc,fit,backgrounds}

\twocolumn[
  \icmltitle{M+Adam: Low-Precision Training via Additive–Multiplicative Optimization}

  % It is OKAY to include author information, even for blind submissions: the
  % style file will automatically remove it for you unless you've provided
  % the [accepted] option to the icml2026 package.

  % List of affiliations: The first argument should be a (short) identifier you
  % will use later to specify author affiliations Academic affiliations
  % should list Department, University, City, Region, Country Industry
  % affiliations should list Company, City, Region, Country

  % You can specify symbols, otherwise they are numbered in order. Ideally, you
  % should not use this facility. Affiliations will be numbered in order of
  % appearance and this is the preferred way.
  \icmlsetsymbol{equal}{*}

\begin{icmlauthorlist}
  \icmlauthor{Xiaoyuan Liang}{caltech}
  \icmlauthor{Sebastian Loeschcke}{ku}
  \icmlauthor{Mads Toftrup}{au}
  \icmlauthor{Anima Anandkumar}{caltech}
\end{icmlauthorlist}

\icmlaffiliation{caltech}{California Institute of Technology}
\icmlaffiliation{ku}{University of Copenhagen}
\icmlaffiliation{au}{Aarhus University}

\icmlcorrespondingauthor{Xiaoyuan Liang}{xliang2@caltech.edu}

  % You may provide any keywords that you find helpful for describing your
  % paper; these are used to populate the "keywords" metadata in the PDF but
  % will not be shown in the document
  \icmlkeywords{low-precision weight updates, quantization-aware training, optimization methods.}

  \vskip 0.3in
]

% this must go after the closing bracket ] following \twocolumn[ ...

% This command actually creates the footnote in the first column listing the
% affiliations and the copyright notice. The command takes one argument, which
% is text to display at the start of the footnote. The \icmlEqualContribution
% command is standard text for equal contribution. Remove it (just {}) if you
% do not need this facility.

% Use ONE of the following lines. DO NOT remove the command.
% If you have no special notice, KEEP empty braces:
\printAffiliationsAndNotice{}  % no special notice (required even if empty)
% Or, if applicable, use the standard equal contribution text:
% \printAffiliationsAndNotice{\icmlEqualContribution}

\begin{abstract}
Training with quantized weights can reduce  costs but often results in degraded accuracy, especially when optimization is carried out in low precision, without storing high-precision copies.  We identify a key failure mode: under low precision, standard optimizers can get stuck and not make progress, especially at large weight magnitudes due to coarse mantissa resolution. To overcome this, multiplicative updates have been previously proposed, in place of additive updates in standard optimizers. While successful under extremely low precision, such as under the logarithmic number system, they suffer from  failures near zero and across sign changes. The failure modes of additive and multiplicative updates are therefore complementary. To exploit this, we propose M+Adam, which combines both update types: additive steps handle sign changes and small magnitudes, while multiplicative steps ensure progress at large magnitudes when additive updates are zeroed out under  rounding.   We prove  monotone descent for \method under standard smoothness assumptions. Across LLaMA-style pretraining with 60M--1B models, $1$--$8\times$ Chinchilla budgets, and using only BF16, FP8, and FP4 master weights, \method consistently improves low-precision training. 
\par\smallskip
\noindent
Code:\;
\href{https://github.com/Anima-Lab/M-Adam-Low-precision-training}
{\faGithub\ \texttt{github.com/Anima-Lab/M-Adam}}
\end{abstract}
\ignorespacesafterend
\section{Introduction}

Training large language models (LLMs) in full precision (FP32) incurs substantial memory, compute, and energy costs. Modern accelerators are optimized for reduced-precision formats such as FP8 and FP4, via specialized hardware like tensor cores~\citep{nvidia2022fp8spec}. As a result, most large-scale training pipelines adopt mixed-precision training, executing forward and backward passes in reduced precision~\citep{micikevicius2018mixed}. However, they still retain high-precision copies (known as master weights) for optimization, since training directly in low precision results in significant degradation of accuracy and is often unstable~\citep{zamirai2021revisitingbfloat16training}.

\begin{figure}[t]
\centering
\begin{tikzpicture}[x=0.47cm,y=0.47cm,>=Stealth,line cap=round,font=\scriptsize]
\tikzset{
  labelrow/.style = {anchor=east, align=right, font=\bfseries\small}
}
\node[anchor=west,black!60] at (1.5,1.5) {\footnotesize Quantization bins under FP system};
\draw[->,black!50,thin] (0.6,1.05) -- (11.6,1.05);

% ================== Row 1: Adam (additive) ==================
\axisrow{0}
\node[labelrow] at (-0.5,0)    {Adam\\[-2pt](additive)};

% circles placed exactly at representable ticks
\coordinate (aSmall) at (1,0);    % tick 1
\coordinate (aLarge) at (8,0);    % tick 8

\draw[very thick,fill=white] (aSmall) circle (0.18);
\draw[very thick,fill=white] (aLarge) circle (0.18);

% small value: additive crosses boundary -> rounds to next tick (1 -> 2)
\draw[adam,->,line width=1.0pt,shorten >=2pt,shorten <=2pt]
  (aSmall) .. controls +(0,0.8) and +(0,0.8) .. (2,0);

% large value: additive move stays within wide bin -> rounds back to same tick (8 -> 8)
\draw[adam,->,line width=1.0pt,shorten >=2pt,shorten <=2pt]
  (aLarge) .. controls +(0,0.9) and +(0,0.9) .. (8.6,0)
  .. controls +(0,-0.9) and +(0,-0.9) .. (aLarge); % loop back to 8
% (arrowhead points at the original tick = rounded result)

% ================== Row 2: Madam (multiplicative) ==========
\axisrow{-2.2}
\node[labelrow] at (-0.5,-2.2) {Madam\\[-2pt](multiplicative)};

\coordinate (mSmall) at (1,-2.2);   % tick 1
\coordinate (mLarge) at (8,-2.2);   % tick 8

\draw[very thick,fill=white] (mSmall) circle (0.18);
\draw[very thick,fill=white] (mLarge) circle (0.18);

% small value: coarse multiplicative jump (1 -> 3.2)
\draw[madam,->,line width=1.0pt,shorten >=2pt,shorten <=2pt]
  (mSmall) .. controls +(0,0.9) and +(0,0.9) .. (1.6,-1.5)
  .. controls +(0,-0.9) and +(0,-0.9) .. (mSmall);
%%\node[black!70,inner sep=1pt] at (2.0,-1.75) {$0$};
% large value: effective multiplicative hop (8 -> 12.2)
\draw[madam,->,line width=1.0pt,shorten >=2pt,shorten <=2pt]
  (mLarge) .. controls +(0,-0.8) and +(0,-0.8) .. (12.2,-2.2);

% ================== Row 3: M+Adam (combined) ===============
\axisrow{-4.4}
\node[labelrow] at (-0.5,-4.4) {M{+}Adam\\[-2pt](combined)};

\coordinate (cSmall) at (1,-4.4);   % tick 1
\coordinate (cLarge) at (8,-4.4);   % tick 5 (to show hop -> 8)

\draw[very thick,fill=white] (cSmall) circle (0.18);
\draw[very thick,fill=white] (cLarge) circle (0.18);

\draw[adam,->,line width=1.0pt,shorten >=2pt,shorten <=2pt]
  (cSmall) .. controls +(0,0.8) and +(0,0.8) .. (2,-4.4);

% large: EXACTLY like Madam — single arrow 8 -> 12.2
% (delete the teal hop line you had)
\draw[madam,->,line width=1.1pt,shorten >=2pt,shorten <=6pt]
  (cLarge) .. controls +(0,-0.8) and +(0,-0.8) .. (12.2,-4.4);
\end{tikzpicture}
\caption{\textbf{Additive (Adam), multiplicative (Madam), and combined \method\ updates under low precision.}
FP grids under low precision cause additive updates to be lost to rounding at large magnitudes and multiplicative updates to stall near zero, while \method\ avoids both failure modes.}
%\vspace{-0.8cm}
\label{fig:mplusadam-why}
\end{figure}

\tikzset{
  panel/.style   = {draw=none, rounded corners=2mm, inner sep=11pt,
                    minimum width=4.1cm, minimum height=4.6cm}, % narrower so it fits full width without \resizebox
  title/.style   = {font=\bfseries, anchor=west},               % inherits \figtextsize
  eqtxt/.style   = {align=center},                              % inherits \figtextsize
  note/.style    = {align=left},                                % inherits \figtextsize
  boxW/.style    = {draw, fill=FillParam, rounded corners=1mm,
                    minimum width=1.9cm, minimum height=1.4cm, align=center},
  boxM/.style    = {draw, fill=FillData, rounded corners=1mm,
                    minimum width=1.7cm, minimum height=1.1cm, align=center},
  boxE/.style    = {draw, fill=FillKernel, rounded corners=1mm,
                    minimum width=1.7cm, minimum height=1.1cm, align=center},
  boxWa/.style   = {draw, fill=FillParam, rounded corners=1mm,
                    minimum width=1.6cm, minimum height=0.7cm, align=center},
  boxAlgo/.style = {draw, rounded corners=1mm, align=center,
                    minimum width=2.4cm, minimum height=1.3cm},
  arr/.style     = {-Latex, line width=0.6pt},
}

\begin{table}[t]
\centering
\small
\setlength{\tabcolsep}{4pt}
\renewcommand{\arraystretch}{1.05}
\begin{tabular}{@{}lccc@{}}
\toprule
& Adam & Madam & M+ADAM \\
\midrule
Additive updates & \checkmark & $\times$ & \checkmark \\
Multiplicative updates & $\times$ & \checkmark & \checkmark \\
Can escape zero & \checkmark & $\times$ & \checkmark \\
\makecell[l]{Non-zero update at large $|w|$}
& $\times$ & \checkmark & \checkmark \\
\bottomrule
\end{tabular}
\caption{Comparison of update mechanisms and their failure modes
across the studied optimizers.}
\label{tab:update-comparison}
\end{table}

% \begin{algorithm}[t]
%   \caption{\textsc{M{+}Adam}: additive--multiplicative update}
%   \label{alg:mp-adam}
% \begin{algorithmic}
%   \STATE \textbf{Inputs:} learning rates \(\eta_a,\eta_m\); moments \(\beta_1,\beta_2\); stabilizer \(\epsilon\); threshold \(\tau\); timestep \(t\)
%   \STATE \textbf{State:} additive moments \((u,v)\gets \mathbf{0}\), multiplicative second moment \(v^{(e)}\gets \mathbf{0}\)
%   \REPEAT
%     \STATE \(g \gets \nabla_w \mathcal{L}(w_t)\)

%     \STATE \textbf{Additive branch:}
%     \STATE \(u \gets \beta_1 u + (1-\beta_1) g\)
%     \STATE \(v \gets \beta_2 v + (1-\beta_2) g^2\)
%     \STATE \(\hat u \gets u/(1-\beta_1^t),\quad \hat v \gets v/(1-\beta_2^t)\)
%     \STATE \(u^{(a)} \gets -\eta_a\,\dfrac{\hat u}{\sqrt{\hat v}+\epsilon}\)

%     \STATE \textbf{Multiplicative branch:}
%     \STATE \(g^{(e)} \gets (\ln 2)\, w_t   g\)
%     \STATE \(v^{(e)} \gets \beta_2 v^{(e)} + (1-\beta_2)\bigl(g^{(e)}\bigr)^2\)
%     \STATE \(\hat v^{(e)} \gets v^{(e)}/(1-\beta_2^t)\)
%     \STATE \(u^{(m)} \gets -\eta_m\,\dfrac{g^{(e)}}{\sqrt{\hat v^{(e)}}+\epsilon}\)
%     \STATE \(\rho \gets \sign(w_t)  \max(|w_t|,\tau)\)
%     \STATE \(r \gets u^{(m)} / \rho\)
%     \STATE \(u^{(m)} \gets r\)

%     \STATE \textbf{Combine branches:}
%     \STATE \(w_{t+1} \gets w_t + w_t  u^{(m)} + u^{(a)}\)
%   \UNTIL{converged}
% \end{algorithmic}
% \end{algorithm}

\renewcommand{\algorithmiccomment}[1]{\hfill$\triangleright$\textit{#1}}
 
\begin{algorithm}[t]
  \caption{M{+}Adam: additive--multiplicative update}
  \label{alg:mp-adam}
\begin{algorithmic}
  \STATE \textbf{Inputs:} learning rates \(\eta_a,\eta_m\); moments \(\beta_1,\beta_2\); stabilizer \(\epsilon\); threshold \(\tau\); timestep \(t\)
  \STATE \textbf{State:} additive moments \((u,v)\gets \mathbf{0}\), multiplicative second moment \(v^{(e)}\gets \mathbf{0}\)
  \STATE All operations are performed element-wise.
  \REPEAT
    \STATE \(g \gets \nabla_w \mathcal{L}(w_t)\)

    \STATE \textbf{Additive branch:}  \algorithmiccomment{Adam-style methods}
    \STATE \(u \gets \beta_1 u + (1-\beta_1) g\)
    \STATE \(v \gets \beta_2 v + (1-\beta_2) g^2\)
    \STATE \(\hat u \gets u/(1-\beta_1^t),\quad \hat v \gets v/(1-\beta_2^t)\)
    \STATE \(u^{(a)} \gets -\eta_a\,\dfrac{\hat u}{\sqrt{\hat v}+\epsilon}\)

    \STATE \textbf{Multiplicative branch:}
    \STATE \(g^{(e)} \gets (\ln 2)\, w_t g\) \algorithmiccomment{Gradient wrt exponent}
    \STATE \(v^{(e)} \gets \beta_2 v^{(e)} + (1-\beta_2)\bigl(g^{(e)}\bigr)^2\)
    \STATE \(\hat v^{(e)} \gets v^{(e)}/(1-\beta_2^t)\)
    \STATE \(\tilde u^{(m)} \gets -\eta_m\,\dfrac{g^{(e)}}{\sqrt{\hat v^{(e)}}+\epsilon}\)
    \STATE \(\rho \gets \max(|w_t|,\tau)\)
    \STATE \(u^{(m)} \gets \tilde u^{(m)} / \rho\)

    \STATE \textbf{Combine branches:}
    \STATE \(w_{t+1} \gets w_t + w_t u^{(m)} + u^{(a)}\)
  \UNTIL{converged}
\end{algorithmic}
\end{algorithm}

Recently, the precision of master weights has been  reduced, e.g. BF16 instead of FP32, using techniques such as stochastic rounding (SR)~\citep{connolly2021srpbea} to keep them unbiased. While this reduces the memory overhead compared to FP32 master weights, they still exhibit degraded convergence or instability relative to full-precision FP32 training. This is because  standard  optimizers, such as Adam, fail to make progress at large weight magnitudes under low precision (larger quantization bins  in Fig.\ref{fig:madam-method-merge}) 
, since updates smaller than the local quantization step are rounded  to zero.

To overcome this failure mode, alternative optimizers have been proposed based on multiplicative updates,  such as Madam~\citep{bernstein2020fromage}. These are effective at taking larger steps at higher magnitudes (larger quantization bins  in Fig.\ref{fig:madam-method-merge}), where standard optimizers with additive updates   fail to make progress. Madam has been shown to be effective for training under extremely low precision, without high-precision master weights. For instance, Madam can train under the logarithmic number system (LNS)~\citep{zhao2021lns-madam},  an extreme case of floating-point system where all the bits are allocated to the exponent, and mantissa is not present. While LNS has optimal hardware utilization in theory, it is not available in standard AI hardware.  

Multiplicative methods such as Madam overcome the failure mode of Adam  at large magnitudes (larger quantization bins  in Fig.\ref{fig:mplusadam-why}), but they have their own failure modes. They struggle near zero, where relative updates produce small absolute changes and cannot move weights through zero when there is a sign change (Fig.~\ref{fig:mplusadam-why}). 

\begin{figure*}[t]
\centering
\resizebox{0.9\textwidth}{!}{%
\begin{tikzpicture}[node distance=0.5cm and 0.5cm]
  \tikzset{
    every node/.append style={font=\footnotesize},
    title/.append style={font=\bfseries},
    panel/.append style={inner sep=1pt},
    boxM/.append style={minimum width=1.6cm, minimum height=0.78cm, inner sep=2pt},
    boxE/.append style={minimum width=1.6cm, minimum height=0.78cm, inner sep=2pt},
    boxW/.append style={minimum width=2.85cm, minimum height=0.92cm, inner sep=2pt},
    boxWa/.append style={minimum width=3.35cm, minimum height=0.92cm, inner sep=2pt},
    arrg/.style={-Latex, line width=0.55pt, dashed},
    tag/.style={draw, rounded corners=0.8mm, fill=black!75,
                inner sep=2pt, text=white, font=\footnotesize\bfseries}
  }

% dummy anchor for the first visible panel
\coordinate (P1) at (0,0);

\node (P2) [panel, right=2mm of P1] {};
\node (P3) [panel, right=2mm of P2] {};
\node (P4) [panel, right=2mm of P3] {};

\begin{scope}[shift={(P2.north west)}]
  \coordinate (origin) at (-0.1,0);

  \coordinate (midME) at ($(origin)+(9.28cm,-2.7cm)$);
  \node (W1) [boxW] at ($(origin)+(6.5cm,-2.71cm)$) {\text{Weight} $w_t$};
  
  \node[title] at ($(origin)+(5pt,-5pt)$) {1) Compute Gradients};

  \node (loss) [boxW] at ($(origin)+(2cm,-2.7cm)$)
    {\(g \equiv \nabla_w \mathcal{L}(w_t)\) \\[-1pt] gradient};

  % \node (gmN) [boxM] at ($(origin)+(2cm, -1.4cm)$)
  %   {$g$\\[-1pt]Adam in weight space};

  \node (geN) [boxE] at ($(origin)+(2cm,-4cm)$)
    {$g^{(e)}=(\ln 2)\,w_t  g$ \\gradient wrt. exponent};
\end{scope}

\begin{scope}[shift={(P3.north west)}]
  \coordinate (origin) at (-0.7,0);
  \node[title] at ($(origin)+(30pt,-5pt)$) {2) Optimizer step};

  \node (adam)  [boxM]
      at ($(origin)+(2.7cm,-1.4cm)$)
      {\textbf{Additive branch}};

  \node (madam) [boxE]
      at ($(origin)+(2.7cm,-4cm)$)
      {\textbf{Multiplicative branch}};

  \draw[arrg]  (loss.north) -- (adam.west);
  \draw[arr]  (loss.south) -- (geN.north);
  % \draw[arrg] (gmN.east)  -- (adam.west);
  \draw[arrg] (geN.east)  -- (madam.west);
\end{scope}

\begin{scope}[shift={(P4.north west)}]
  \coordinate (origin) at (0.1,0);
  \node[title] at ($(origin)+(0pt,-5pt)$) {3) Combine \& update weights};

  \node (mplus) [boxM] at ($(origin)+(1.2cm,-1.4cm)$) {$u^{(a)}$};
  \node (eplus) [boxE] at ($(origin)+(1.2cm,-4cm)$) {$u^{(m)}$};

  \draw[arr] (adam.east)  -- (mplus.west);
  \draw[arr] (madam.east) -- (eplus.west);

  \node (mergeeq) [boxWa] at ($(origin)+(3.7cm,-2.7cm)$)
    {$w_{t+1}=w_t+w_t u^{(m)} +u^{(a)}$};

  \draw[arr] (mplus.east) -- ($(mergeeq.north)+(0,0.5mm)$);
  \draw[arr] (eplus.east) -- ($(mergeeq.south)-(0,0.5mm)$);

  \draw[arr] (W1.east) -- ($(mergeeq.west)$);
  \draw[arr] (W1.west) -- ($(loss.east)$);
\end{scope}

\end{tikzpicture}%
}
\caption{Overview of \method. 
%\textbf{(1) Weight-space view:} start from the current weights \(w_t\).
\textbf{(1) Compute the two gradients:} the standard weight gradient \(g=\nabla_w\mathcal{L}(w_t)\),  and gradient with respect to exponent, which evaluates as \(g^{(e)}=(\ln 2)\,w_t g\).
\textbf{(2) Compute the two updates:} use gradient $g$ to compute standard additive update \(u^{(a)}\) and $g^{(e)}$ for relative multiplicative update \(u^{(m)}\).
\textbf{(3) Combine the two updates:} apply the combined update \(w_{t+1}=w_t+w_t u^{(m)}+u^{(a)}\) to obtain the next weights.}
\vspace{-0.25cm}
\label{fig:madam-method-merge}
\end{figure*}

{\bf Our approach: }The additive and multiplicative updates thus suffer from complementary modes of failure. Additive updates get rounded to zero at large magnitudes when mantissa resolution is coarse. Multiplicative updates are able to overcome this and make progress at large magnitudes, but they cannot flip signs or escape exact zero initialization.(Fig.~\ref{fig:mplusadam-why} and Table~\ref{tab:update-comparison}).  

This motivates us to propose \method. It is a new optimizer  that combines Adam-style additive updates with Madam-style multiplicative updates for low-precision training, as outlined in Algorithm~\ref{alg:mp-adam} and   Fig.~\ref{fig:madam-method-merge}.
We prove that \method\ is a monotone descent method under standard smoothness assumptions, making it a valid optimizer. 

%We build on the insight that floating-point arithmetic naturally separates scale and precision via a mantissa-exponent representation \(w= m2^e\), where updating the mantissa provides a fine-grained correction while updating the exponent results in a scale change. 

%Our main contributions are:
%\begin{itemize}
%\item \textbf{Hybrid additive--multiplicative optimization.}
%We identify complementary low-precision failure modes of additive and multiplicative updates, and introduce \method, a two-branch optimizer that combines local additive updates with multiplicative scale adjustment for low-precision weights.
%\item \textbf{Theory for update dynamics.}

%The analysis formalizes the complementary roles of the two branches: multiplicative updates preserve relative progress when additive updates are lost to rounding, while additive updates provide the local corrections needed for sign changes and small-magnitude updates.
% \item \textbf{Low-precision pretraining.}
% We evaluate \method\ on 60M--350M LLaMA-style models across \(1\)--\(8\times\) Chinchilla budgets, and on 1B models at \(1\times\) Chinchilla across all precision regimes. Table~\ref{tab:where_fp8_and_gain_1x} covers BF16, FP8, and NVFP4 master weights with BF16 or FP8 compute.
%\item \textbf{Stable low-precision pretraining.}
We demonstrate stable training with BF16, FP8, and NVFP4 master weights across \(60\)M--\(1\)B LLaMA-style models and \(1\)--\(8\times\) Chinchilla training budgets. \method\ trains stably without needing any stochastic rounding (SR), and improves over AdamW and AdamW+SR, as seen in Table~\ref{tab:final-all-1x}.

Together, these results support our central claim: combined additive--multiplicative updates are especially useful when   low-precision master weights becomes a dominant source of optimization error.

% \begin{table}[H]
% \centering
% \caption{\textbf{\method\ improvement over AdamW at \(1\times\) Chinchilla.}
% Entries show percentage perplexity reduction; weights are stored and updated in the stated format without higher-precision master weights.}
% \label{tab:where_fp8_and_gain_1x}
% \small
% \setlength{\tabcolsep}{2.4pt}
% \renewcommand{\arraystretch}{1.0}
% \begin{tabular}{@{}lcccc@{}}
% \toprule
% Weights / compute & 60M & 130M & 350M & 1B \\
% \midrule
% BF16 / BF16   & 2.78\% & 6.59\% & 13.31\% & 9.61\% \\
% BF16 / FP8    & 3.15\% & 6.18\% & 6.91\%  & 4.01\% \\
% FP8 / FP8     & 3.41\% & 2.73\% & 10.51\% & 3.03\% \\
% NVFP4 / FP8   & 8.67\% & 7.29\% & 16.62\% & 7.21\% \\
% \bottomrule
% \end{tabular}
% \end{table}

\section{Related Work}
We review prior work on multiplicative optimization and low-precision training most closely related to our approach.

\subsection{Multiplicative Updates}
Multiplicative update rules apply relative, scale-invariant parameter changes and have a long history in optimization~\citep{kivinen1997exponentiated}.
In deep learning, Madam~\citep{bernstein2020fromage} adapts this principle as a multiplicative analogue of Adam, showing improved robustness and reduced sensitivity to learning-rate scaling.
Recent work revisits multiplicative or hybrid updates in modern settings:
\citet{kirtas2024multiplicative} study additive--multiplicative hybrids;
\citet{zhao2021lns-madam} combine multiplicative updates with a logarithmic number system to enable low-precision training without FP32 master weights;
and \citet{nishida2025lmd} analyze log-normal multiplicative dynamics under low-precision forward passes.
In contrast to log-based Madam approaches, \method\ remains within the standard floating-point arithmetic.

\subsection{Low-precision Training}
Mixed-precision frameworks such as AMP~\citep{pytorch_amp} and Transformer Engine~\citep{te_userguide} accelerate BF16/FP8 arithmetic, but are largely orthogonal to optimizer design and typically rely on FP32 master weights for stable accumulation~\citep{micikevicius2018mixed}.
Standard additive optimizers such as AdamW can degrade when master weights are stored in reduced precision, since nearest rounding can cancel small model-weight updates~\citep{zamirai2021revisitingbfloat16training}.
Stochastic rounding preserves small updates in expectation and can mitigate deterministic rounding bias~\citep{ozkara2025stochasticroundingllmtraining}.
Kahan-style compensated updates provide another remedy by accumulating lost low-order update residuals in an auxiliary buffer, and have been shown to recover much of the accuracy lost in pure BF16 training~\citep{zamirai2021revisitingbfloat16training}.
While both stochastic rounding and Kahan compensation can improve accumulation accuracy, they do not change the underlying additive update geometry or address dynamic-range limitations directly.

Several complementary approaches reduce low-precision storage costs without changing the optimizer geometry.
LoQT~\citep{loeschcke2024loqt} trains most weights in 4-bit precision while maintaining a low-rank 16-bit subspace.
A separate line of work compresses optimizer states through 8-bit or 4-bit moment quantization~\citep{dettmers20218bit, li2023memoryefficientoptimizers}.
\citet{fishman2024scalingfp8} study stable FP8 LLM pretraining at trillion-token scale with FP32 weights and FP8 forward computation, including quantized optimizer moments, focusing on system-level scaling and mixed-precision pipelines.
In contrast, \method\ modifies the update rule itself through a joint additive--multiplicative formulation, targeting the stability of low-precision weight storage rather than optimizer-state compression.

Recent work~\citep{castro2025quartet} explores training with 4-bit weights and activations using micro-scaling formats such as MXFP4, where each block shares a common scale factor.
These approaches mainly stabilize low-bit matrix multiplication by managing outliers and representational fidelity during computation.
Because low-bit quantization is highly discrete, such systems often keep higher-precision weights and round to the target format before GEMMs.
Our work addresses a complementary challenge: stable gradient-based parameter updates when the master weights themselves are stored in BF16, FP8, or NVFP4.
Thus, outlier-aware quantization and \method\ attack different bottlenecks: the former improves low-bit computation, while the latter improves numerical stability across optimization steps.

\section{Background}
We briefly review Adam and Madam, the two optimizers that \method\ builds upon.

\textbf{Adam.}
Adam~\citep{kingma2015adam} maintains exponential moving averages of the first and second moments of the gradient and applies adaptive, element-wise additive updates. AdamW~\citep{loshchilov2019decoupled} decouples weight decay from this adaptive gradient step:
\begin{equation}
\text{Adam}(w,g)
\;=\;
(1-\eta\lambda)w
-
\eta \frac{\mu}{\sqrt{\nu}+\epsilon}.
\end{equation}

While effective in full precision, these additive updates can be rounded away in low-precision formats when their magnitudes fall below the local quantization step, which grows with \(|w|\) in floating-point formats such as BF16 or FP8.

\textbf{Madam.}
Madam~\citep{bernstein2020fromage} retains Adam’s moment estimates but applies updates multiplicatively:
\begin{equation}
\text{Madam}(w,g) \;=\; w   \exp\!\Big(-\eta\,\frac{\mu}{\sqrt{\nu} + \epsilon}\Big).
\end{equation}
This yields scale-invariant behavior, with step sizes proportional to \(|w|\), but purely multiplicative updates preserve sign and cannot escape \(w=0\). 
% We illustrate these complementary failure modes of AdamW and Madam in a controlled toy example (Sec.~\ref{sec:toy_example}), which motivates combining additive refinement with multiplicative scale control.
We illustrate these complementary failure modes of AdamW and Madam in a controlled toy example (Sec.~\ref{sec:toy_example}), which motivates combining local additive updates with multiplicative scale control.

\section{\method: Additive--Multiplicative Optimization}
\method\ is designed for low-precision weight storage, where the optimizer must update weights on a nonuniform floating-point grid. The method combines two complementary update mechanisms: an Adam-style additive branch that acts directly on the weights, and a Madam-style multiplicative branch that applies relative scale changes. This section derives the two branches, explains how they are combined, and presents the corresponding descent result.

\subsection{Additive--Multiplicative Updates}

Low-precision floating-point weights have nonuniform resolution: the spacing between representable values grows with the exponent. As a result, purely additive updates can become rounded back to zero at large magnitudes. When \(|w_t|\) is large, an additive update can be smaller than the local quantization bin and disappear after rounding; when \(|w_t|\) is small, however, additive updates remain important because they can update values near zero and change signs. \method\ is designed around this observation: it combines an Adam-style additive correction with a Madam-style multiplicative scale update.

Let \(g_t=\nabla_w \mathcal L(w_t)\) be the gradient of weight $w_t$. The additive branch is the standard Adam-style update $u_t^{(a)}$ in the weight space,
\[
 u_t^{(a)}
=  -\eta_a\,\dfrac{\hat u}{\sqrt{\hat v}+\epsilon}.
 \]

 A multiplicative update is as follows element-wise:
\[
w_{t+1}=w_t  \exp(a_t)=
w_t 2^{a_t/\ln 2},
\]
where \(a_t\) is an adaptive preconditioned descent direction.  Naturally, the update $a_t$ is 
 based on the gradient $g_t^{(e)}$ with respect to the exponent. To  obtain this, we use the mantissa $m_t$ and exponent $e_t$ in floating-point decomposition in the base-2 system:
\[
w_t = m_t  2^{e_t},
\]
The corresponding gradient simplifies as
\[
g_t^{(e)}
=
\frac{\partial \mathcal L}{\partial e}
=
(\ln 2)\, w_t g_t .
\] Thus the gradient $g_t^{(e)}$ can be expressed without the explicit exponent value, and we use this in our updates.
 %This decomposition is only a transient coordinate system used to define the update; \method\ stores and updates the ordinary parameter \(w_t\), not separate mantissa and exponent tensors.
 
%which provides the local flexibility missing from purely multiplicative methods: it can make small absolute updates, change signs, and recover coordinates that are exactly zero.

We then apply adaptive normalization to $g_t^{(e)}$. In our implementation, the multiplicative branch uses a second-moment accumulator,
\begin{align}
v_t^{(e)}
&=
\beta_2 v_{t-1}^{(e)}
+
(1-\beta_2)\bigl(g_t^{(e)}\bigr)^2,\\
\widetilde u_t^{(m)}
&=
-\eta_m
\frac{g_t^{(e)}}{\sqrt{\hat v_t^{(e)}}+\epsilon}.
\end{align}

Thus, \(\widetilde u_t^{(m)}\) defines the branch's proposed scale movement:
\[\rho_t
=\max(|w_t|,\tau_t),
\qquad
u_t^{(m)}
=
\operatorname{clip}\!\left(\widetilde u_t^{(m)}/ \rho_t\right)
\] The threshold $\tau_t>0$ prevents division by very small weights, and clipping keeps the relative factor well-defined.
% \[
% \rho_t
% =
% \operatorname{sgn}_+(w_t) \max(|w_t|,\tau_t),
% \qquad
% u_t^{(m)}
% =
% \operatorname{clip}\!\left(\widetilde u_t^{(m)}/ \rho_t\right),
% \]
% where \(\operatorname{sgn}_+(0)=+1\), \(\tau_t>0\) prevents division by very small weights, and clipping keeps the relative factor well-defined. For nonzero weights away from the threshold, \(u_t^{(m)}\) is the proposed relative rescaling: the multiplicative branch alone would update
\[
w_t
\mapsto
w_t +  w_t u_t^{(m)}.
\]
%This is the Madam-style step in relative form, replacing the exponential factor \(\exp(a_t)\) by the equivalent local scale factor \(1+u_t^{(m)}\).

%Because the scale coordinate is base two, the relative factor \(1+u_t^{(m)}\) corresponds exactly to an additive exponent update
%\[
%\Delta e_t=\log_2(1+u_t^{(m)}).
%\]
%Including decoupled decay on the scale coordinate gives
%\[
%e_{t+1}
%=
%(1-\eta_m\lambda_m)e_t+\Delta e_t .
%\]

Finally, \method\ combines the multiplicative rescaling with the additive displacement:
\[
w_{t+1}
=
w_t+w_t u_t^{(m)}+u_t^{(a)}.
\]
%Equivalently, since \(1+u_t^{(m)}=2^{e_{t+1}-e_t}\),
%\[
%w_{t+1}
%=
%2^{e_{t+1}-e_t} w_t+u_t^{(a)}.
%\]
%Thus, \method\ is a parallel additive--multiplicative update: one branch applies an Adam-style displacement in ordinary weight space, while the other applies a Madam-style relative rescaling through the floating-point exponent.

This structure directly targets the two failure modes shown in Fig.~\ref{fig:mplusadam-why}. Additive updates can change signs and update small-magnitude weights, but they may be lost to rounding at large magnitudes. Multiplicative updates preserve relative progress at large magnitudes, but by themselves are sign-preserving and cannot recover exact zeros. By combining both branches, \method\ maintains scale-aware progress where additive updates are rounded away while retaining the local flexibility needed near zero. Algorithm~\ref{alg:mp-adam} summarizes the core method. Optionally, clipping is added for more stability, which is not listed in Algorithm~\ref{alg:mp-adam}, as outlined in App.~\ref{app:clipping}.

% It also incorporates clipping the  multiplicative update  \(u_t^{(m)}\), and optionally caps exponent steps for numerical stability; see App.~\ref{app:clipping}.

\subsection{Theoretical Analysis}

We analyze the deterministic exact-arithmetic core of Algorithm~\ref{alg:mp-adam}.
Let \(\mathcal L(w)\) be the weight-space objective, let
\(g=\nabla_w\mathcal L(w_t)\), and define the base-2 gradient with respect to exponent signal
\[
    g^{(e)}=(\ln 2)w_t g,
\]
obtained by differentiating \(z\mapsto \mathcal L(w_t 2^z)\) at \(z=0\).
Algorithm~\ref{alg:mp-adam} forms a relative multiplicative update \(u^{(m)}\), sets \(s=1+u^{(m)}\), and
updates \(w_{t+1}=w_t s+u^{(a)}\).  For \(s>0\), writing \(z=\log_2s\) gives the
equivalent scale-then-add form \(w_{t+1}=w_t 2^z+u^{(a)}\).  We assume the
following local smoothness bound in these coordinates: for all \((d,z)\) in a
neighborhood of the update,
\begin{equation}
\label{eq:scale-add-smooth}
\begin{aligned}
    \mathcal L(w_t 2^z+d)
    \leq\;&
    \mathcal L(w_t)
    + \langle g,d\rangle
    + \langle g^{(e)},z\rangle  \\
    &+ \frac{L_a}{2}\|d\|^2
    + \frac{L_e}{2}\|z\|^2
    + L_{ae}\|d\|\,\|z\| .
\end{aligned}
\end{equation}
Here \(L_a\) and \(L_e\) control curvature within the additive and exponent
directions, while \(L_{ae}\) controls their interaction.

\begin{theorem}[Idealized scale-then-add descent]
\label{thm:scale_add_descent_maintext}
Assume Eq.~\eqref{eq:scale-add-smooth} holds at \(w_t\).  Consider
\(w_{t+1}=w_t s+u^{(a)}\), with \(s>0\) element-wise and \(z=\log_2s\).  Suppose
the branch outputs satisfy, for some \(\eta_a,\eta_m>0\),
\[
\begin{gathered}
    \langle g,u^{(a)}\rangle\leq -\eta_a\|g\|^2,
    \qquad
    \|u^{(a)}\|\leq \eta_a\|g\|,\\
    \langle g^{(e)},z\rangle\leq -\eta_m\|g^{(e)}\|^2,
    \qquad
    \|z\|\leq \eta_m\|g^{(e)}\|.
\end{gathered}
\]
If
\[
    L_a\eta_a+L_{ae}\eta_m<2,
    \qquad
    L_e\eta_m+L_{ae}\eta_a<2,
\]
then
\begin{equation}
\label{eq:scale-add-descent}
\begin{aligned}
    \mathcal L(w_{t+1})
    \leq\;&
    \mathcal L(w_t) \\
    &-
    \frac{\eta_a}{2}
    \left(2-L_a\eta_a-L_{ae}\eta_m\right)\|g\|^2 \\
    &-
    \frac{\eta_m}{2}
    \left(2-L_e\eta_m-L_{ae}\eta_a\right)\|g^{(e)}\|^2 .
\end{aligned}
\end{equation}
In particular, \(\mathcal L(w_{t+1})<\mathcal L(w_t)\) unless \(g=0\).
\end{theorem}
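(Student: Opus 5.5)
Since \(s>0\) element-wise, the update can be rewritten as \(w_{t+1}=w_t 2^{z}+d\) with \(d=u^{(a)}\) and \(z=\log_2 s\). Assuming this pair lies in the neighborhood where Eq.~\eqref{eq:scale-add-smooth} holds, I apply that bound directly. This leaves five terms to control: two linear terms, two quadratic terms, and one cross term.

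The linear terms are handled by the branch hypotheses: \(\langle g,d\rangle\le-\eta_a\|g\|^2\) and \(\langle g^{(e)},z\rangle\le-\eta_m\|g^{(e)}\|^2\). The norm hypotheses bound the quadratic terms by \(\tfrac{L_a}{2}\eta_a^2\|g\|^2\) and \(\tfrac{L_e}{2}\eta_m^2\|g^{(e)}\|^2\).

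The cross term is the only step requiring a choice. I first bound it by \(L_{ae}\|d\|\|z\|\le L_{ae}\eta_a\eta_m\|g\|\|g^{(e)}\|\). I then split it with AM--GM, \(\|g\|\|g^{(e)}\|\le\tfrac12(\|g\|^2+\|g^{(e)}\|^2)\), into \(\tfrac{L_{ae}\eta_a\eta_m}{2}\|g\|^2+\tfrac{L_{ae}\eta_a\eta_m}{2}\|g^{(e)}\|^2\). This symmetric split is what produces the stated constants.

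Collecting the \(\|g\|^2\) terms gives
\[
-\eta_a+\tfrac{L_a\eta_a^2}{2}+\tfrac{L_{ae}\eta_a\eta_m}{2}=-\tfrac{\eta_a}{2}\bigl(2-L_a\eta_a-L_{ae}\eta_m\bigr).
\]
The \(\|g^{(e)}\|^2\) terms give \(-\tfrac{\eta_m}{2}(2-L_e\eta_m-L_{ae}\eta_a)\) in the same way. Together these yield Eq.~\eqref{eq:scale-add-descent}.

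For strict descent, the step-size conditions make both coefficients strictly positive, so both subtracted terms are nonnegative. If \(g\neq0\), the first term is strictly positive, and therefore \(\mathcal L(w_{t+1})<\mathcal L(w_t)\). If \(g=0\), then \(g^{(e)}=(\ln 2)w_t g=0\) as well, so the inequality reduces to \(\mathcal L(w_{t+1})\le\mathcal L(w_t)\), which is consistent with the ``unless \(g=0\)'' clause.

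The step I expect to be the main obstacle is the cross term, specifically choosing the AM--GM weights so that the interaction cost appears as \(L_{ae}\eta_m\) in the additive coefficient and \(L_{ae}\eta_a\) in the exponent coefficient. The unweighted split above achieves exactly this. A secondary point is that the bound in Eq.~\eqref{eq:scale-add-smooth} is only local, so I treat the requirement that \((d,z)\) lie in the neighborhood as part of the hypothesis that the equation holds at \(w_t\) for this update.
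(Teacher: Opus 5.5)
Your proposal is correct and matches the paper's proof essentially step for step: rewrite the update as $w_t 2^z + u^{(a)}$, apply Eq.~\eqref{eq:scale-add-smooth} with $d=u^{(a)}$, bound the linear, quadratic, and cross terms using the branch hypotheses, split the cross term with the symmetric Young/AM--GM inequality, and collect coefficients. You also note explicitly that $g=0$ forces $g^{(e)}=0$, a small point the paper leaves implicit.
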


The theorem formalizes the local two-branch geometry of Algorithm~\ref{alg:mp-adam}: an additive weight-space displacement and a multiplicative scale change can be composed while preserving descent under the stated alignment and smoothness conditions. The proof is given in Appendix~\ref{app:proofs}.

% \begin{algorithm}[t]
%   \caption{\textsc{M{+}Adam}: additive--multiplicative update}
%   \label{alg:mp-adam}
% \begin{algorithmic}
%   \STATE \textbf{Inputs:} learning rates \(\eta_m,\eta_a\); moments \(\beta_1,\beta_2\); stabilizer \(\epsilon\); threshold \(\tau\); timestep \(t\)
%   \STATE \textbf{State:} additive moments \((u,v)\gets \mathbf{0}\), multiplicative second moment \(v^{(e)}\gets \mathbf{0}\)
%   \REPEAT
%     \STATE \(g \gets \nabla_w \mathcal{L}(w_t)\)

%     \STATE \textbf{Additive branch:}
%     \STATE \(u \gets \beta_1 u + (1-\beta_1) g\)
%     \STATE \(v \gets \beta_2 v + (1-\beta_2) g^2\)
%     \STATE \(\hat u \gets u/(1-\beta_1^t),\quad \hat v \gets v/(1-\beta_2^t)\)
%     \STATE \(d^{(m)} \gets -\eta_m\,\dfrac{\hat u}{\sqrt{\hat v}+\epsilon}\)

%     \STATE \textbf{Multiplicative branch:}
%     \STATE \(g^{(e)} \gets (\ln 2)\, w_t   g\)
%     \STATE \(v^{(e)} \gets \beta_2 v^{(e)} + (1-\beta_2)\bigl(g^{(e)}\bigr)^2\)
%     \STATE \(\hat v^{(e)} \gets v^{(e)}/(1-\beta_2^t)\)
%     \STATE \(d^{(e)} \gets -\eta_a\,\dfrac{g^{(e)}}{\sqrt{\hat v^{(e)}}+\epsilon}\)
%     \STATE \(\rho \gets \sign(w_t)  \max(|w_t|,\tau)\)
%     \STATE \(r \gets d^{(e)} / \rho\)
%     \STATE \(s \gets 1+r\)

%     \STATE \textbf{Combine branches:}
%     \STATE \(w_{t+1} \gets w_t  s + d^{(m)}\)
%   \UNTIL{converged}
% \end{algorithmic}
% \end{algorithm}

\begin{figure*}[t]
  \centering
  \includegraphics[
    width=0.98\textwidth,
    height=0.28\textheight,
    keepaspectratio
  ]{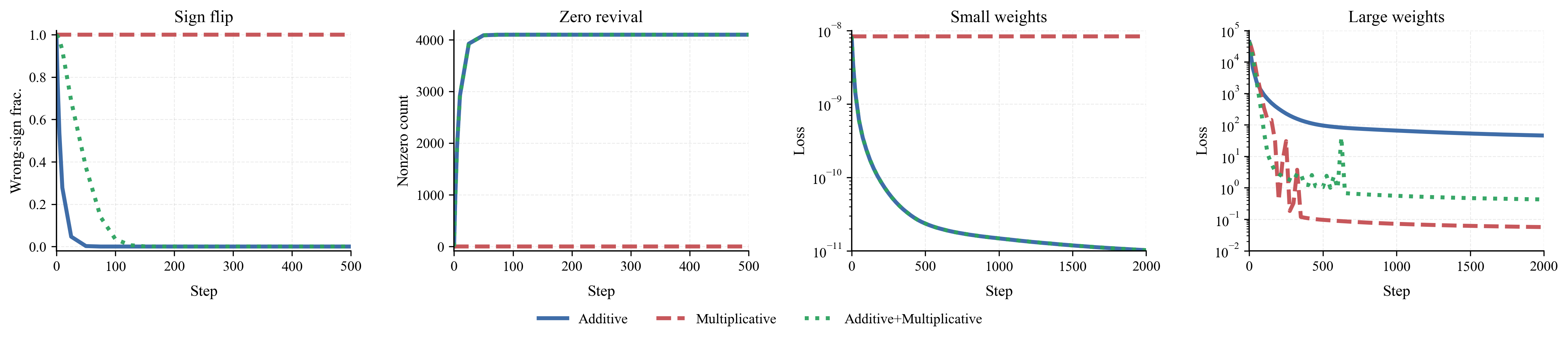}
  \vspace{-0.5em}
  \caption{\textbf{BF16-stored matrix-fitting diagnostics.} \additive and \multiplicative use the optimal learning rate per task and \combined uses a shared learning rate across all tasks.
  (a) \additive\ and \combined\ can change signs to correct sign errors, while \multiplicative is sign-preserving.
  (b) \multiplicative is zero-absorbing, while \additive and \combined{} recover zero weights through the additive branch.
  (c) For small weights, \additive{} and \combined{} reach the BF16-limited floor, while \multiplicative{} plateaus higher.
  (d) For large weights, BF16 spacing grows with \(|w|\), so \additive{} steps can be lost to rounding while \multiplicative{} updates remain active.}
  \label{fig:toy_diag_1x4}
  % \vspace{-0.5em}
\end{figure*}

\section{Empirical Findings}
\label{sec:experiments}

We evaluate \method\ in two stages. First, a controlled BF16-stored matrix-fitting diagnostic isolates the update-level failure modes predicted by our analysis. Second, LLaMA-style pretraining on C4/EN tests whether the same additive--multiplicative geometry improves low-precision training across model scale, token budget, and precision regime. Our main LLM study covers 60M, 130M, and 350M models over \(1\), \(2\), \(4\), and \(8\times\) Chinchilla budgets, and we additionally evaluate 1B models at the \(1\times\) budget across all precision regimes. We consider BF16, FP8, and NVFP4 master-weight storage with BF16 or FP8 compute.

\begin{figure*}[t]
\centering
\setlength{\tabcolsep}{3pt}
\begin{tabular}{@{}ccc@{}}

\begin{minipage}[t]{0.333333\textwidth}
\centering
\includegraphics[width=\linewidth]{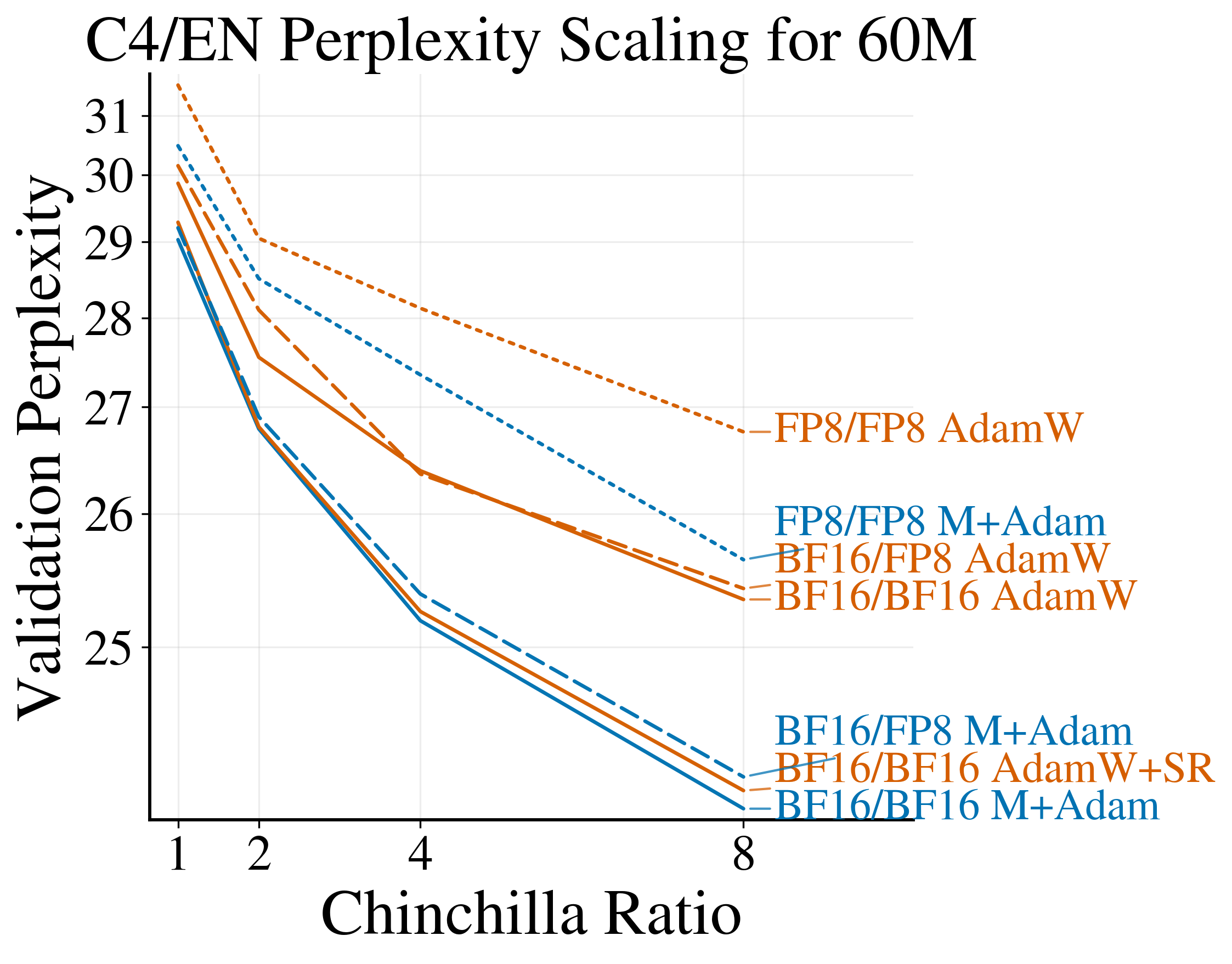}
\end{minipage}
&
\begin{minipage}[t]{0.333333\textwidth}
\centering
\includegraphics[width=\linewidth]{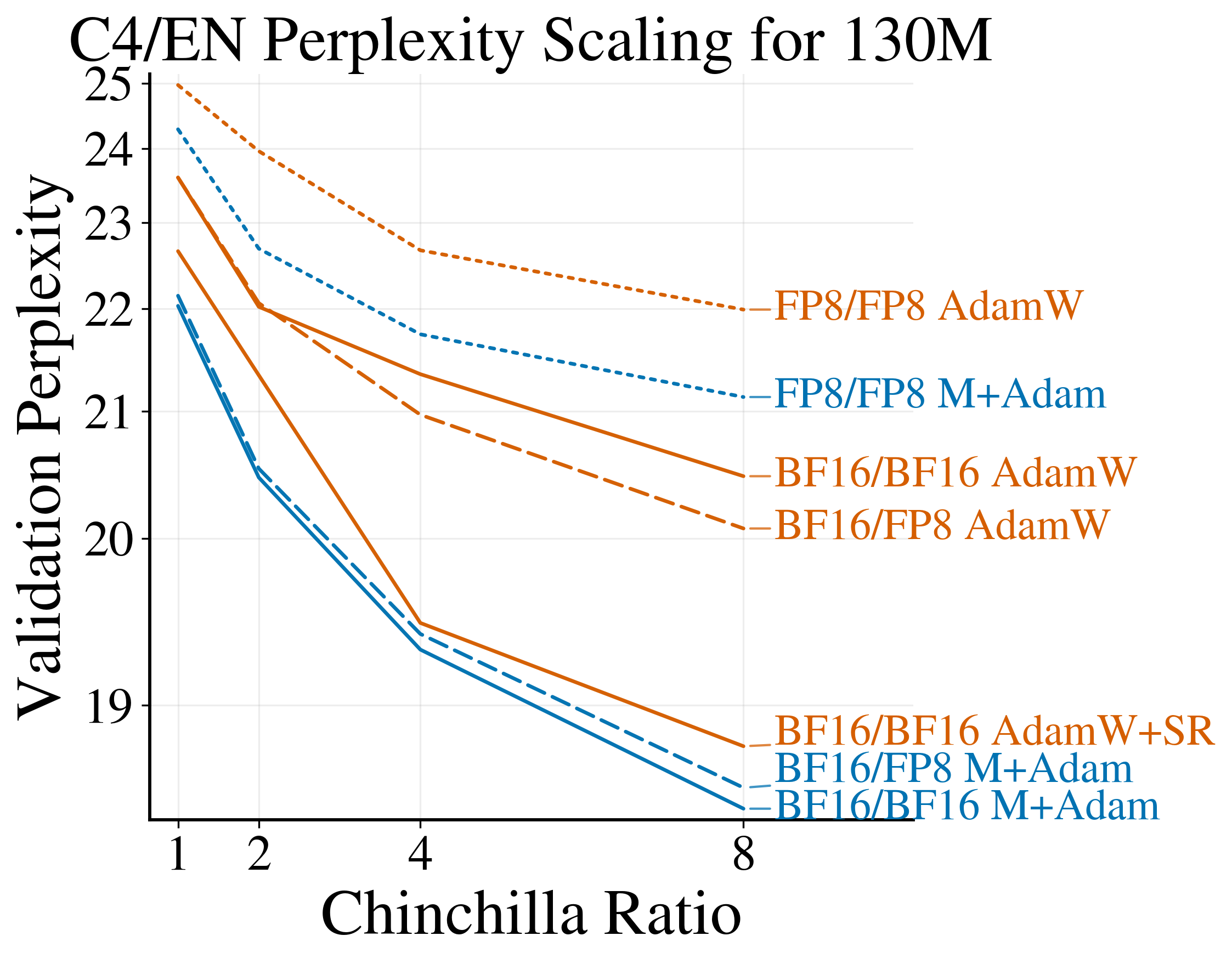}
\end{minipage}
&
\begin{minipage}[t]{0.333333\textwidth}
\centering
\includegraphics[width=\linewidth]{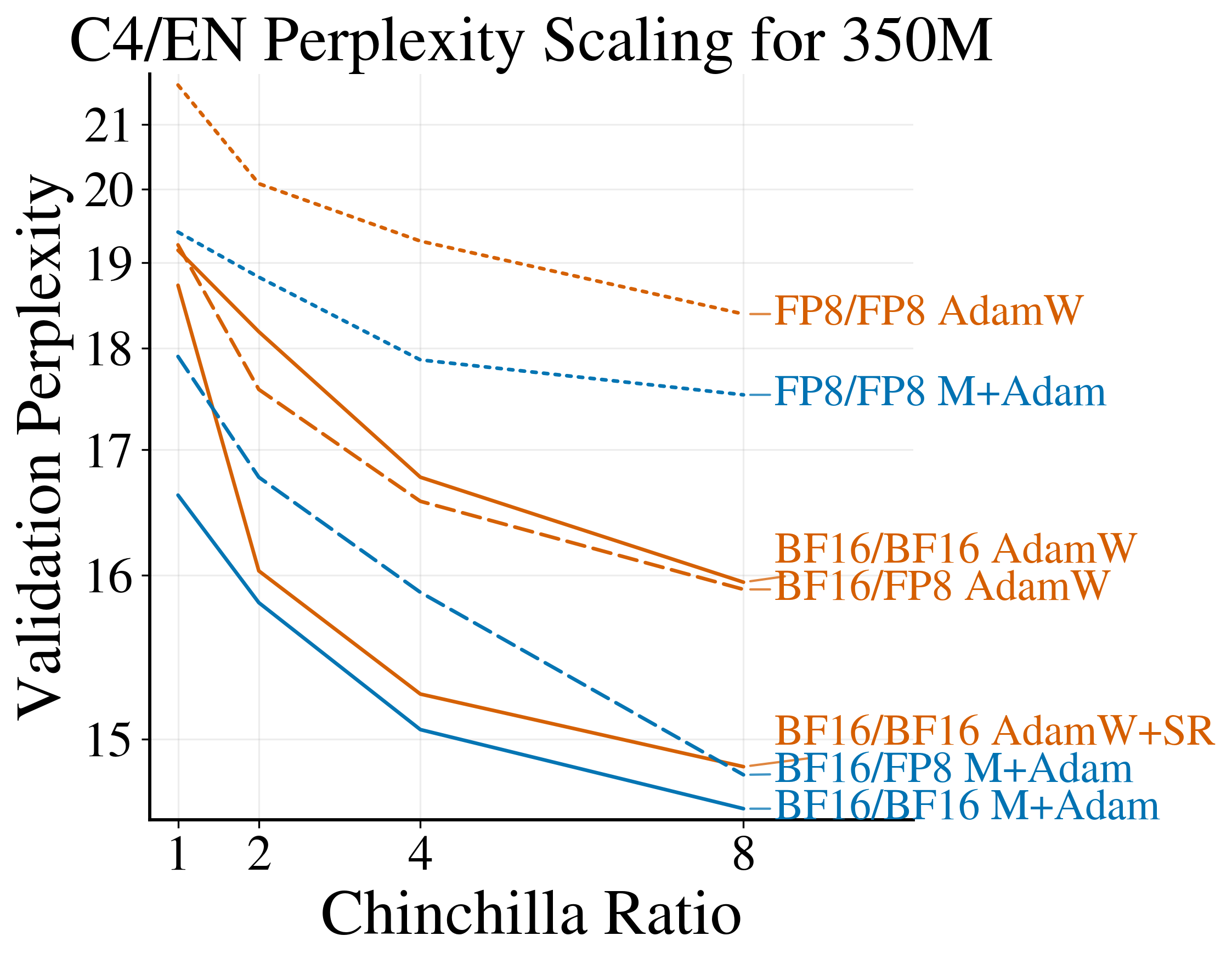}
\end{minipage}

\end{tabular}
\caption{\textbf{C4/EN perplexity scaling across model sizes and precision regimes.}
Validation perplexity as a function of Chinchilla ratio for 60M, 130M, and 350M LLaMA-style models. Each panel
overlays tuned AdamW and \method\ runs across BF16-weight/BF16-compute, BF16-weight/FP8-compute, and FP8-weight/FP8-compute regimes. AdamW with stochastic rounding is included in the BF16/BF16 setting. Lower perplexity is better.}
\label{fig:all-regime-scaling-wide}
\end{figure*}

\begin{table*}[t]
\centering
\small
\caption{
Comparison of AdamW and \method\ pre-training for LLaMA2-style language models on C4 at the \(1\times\) Chinchilla budget.
We report validation perplexity for each model size, optimizer, and precision regime.
The NVFP4/FP8 setting uses true NVFP4 master-weight storage with FP8 GEMMs and is included as an aggressive low-precision stress test.
Each entry is the best value achieved over the corresponding hyperparameter sweep unless otherwise noted. %\seb{TE?}
}
\label{tab:final-all-1x}
\setlength{\tabcolsep}{3.5pt}
\renewcommand{\arraystretch}{1.05}
\begin{tabularx}{\textwidth}{@{}l l *{4}{>{\centering\arraybackslash}X}@{}}
\toprule
\textbf{Weights / compute} & \textbf{Optimizer}
& \textbf{60M} & \textbf{130M} & \textbf{350M} & \textbf{1B} \\
\midrule
\multicolumn{6}{@{}l}{\textit{FP32 weights with TF32 compute}} \\
FP32 / TF32 & AdamW
& 29.047 & 22.615 & 15.955 & 14.011 \\
\midrule
\multicolumn{6}{@{}l}{\textit{BF16 weights with BF16 compute}} \\
BF16 / BF16 & AdamW
& 29.865 & 23.587 & 19.156 & 15.642 \\
BF16 / BF16 & AdamW + Kahan
& 29.275 & 22.859 & 18.758 & 14.854 \\
BF16 / BF16 & AdamW + SR
& 29.277 & 22.646 & 18.720 & 14.346 \\
BF16 / BF16 & \method
& \textbf{29.035} & \textbf{22.032} & \textbf{16.607} & \textbf{14.139} \\

\midrule
\multicolumn{6}{@{}l}{\textit{BF16 weights with FP8 compute}} \\
BF16 / FP8 & AdamW
& 30.151 & 23.594 & 19.234 & 16.346 \\
BF16 / FP8 & \method
& \textbf{29.201} & \textbf{22.136} & \textbf{17.905} & \textbf{15.690} \\

\midrule
\multicolumn{6}{@{}l}{\textit{FP8 weights with FP8 compute}} \\
FP8 / FP8 & AdamW
& 31.560 & 24.966 & 21.674 & 17.235 \\
FP8 / FP8 & \method
& \textbf{30.484} & \textbf{24.284} & \textbf{19.395} & \textbf{16.712} \\

\midrule
\multicolumn{6}{@{}l}{\textit{NVFP4 weights with FP8 compute}} \\
NVFP4 / FP8 & AdamW
& 33.884 & 27.517 & 24.941 & 19.066 \\
NVFP4 / FP8 & \method
& \textbf{30.948} & \textbf{25.511} & \textbf{20.796} & \textbf{17.692} \\

\bottomrule
\end{tabularx}
\end{table*}

\begin{table*}[t]
  \centering
  \small
  \setlength{\tabcolsep}{6pt}
  \renewcommand{\arraystretch}{1.08}
  \caption{\textbf{Scaling-sensitive hyperparameters under low-precision regimes.}
  We list hyperparameters whose approximate-optimal values shift with model
  scale and/or token budget under the restricted re-tuning protocol. Hyperparameters
  not listed were found to transfer across the evaluated regimes. We report results
  for the two precision settings used throughout the paper: BF16 weights with BF16
  compute, and BF16 weights with FP8 compute.}
  \label{tab:scaling-sensitive-hparams}
  \begin{tabular}{@{}lll@{}}
    \toprule
    \textbf{Setting} & \textbf{Optimizer} & \textbf{Scaling-sensitive hyperparameters} \\
    \midrule
    \multirow{2}{*}{BF16 weights + BF16 compute}
      & AdamW
      & learning rate, warmup, $\beta_1$, $\beta_2$, gradient-norm clipping \\
      & \method
      & multiplicative learning rate, warmup, multiplicative weight decay, $\epsilon$ \\
    \midrule
    \multirow{2}{*}{BF16 weights + FP8 compute}
      & AdamW
      & learning rate, warmup, $\beta_1$, $\beta_2$, gradient-norm clipping \\
      & \method
      & multiplicative learning rate, warmup, $\beta_1$, multiplicative weight decay, $\epsilon$ \\
    \bottomrule
  \end{tabular}
\end{table*}

\begin{table*}[t]
\centering
\small
\setlength{\tabcolsep}{6pt}
\renewcommand{\arraystretch}{1.05}
\caption{\textbf{Optimizer-state footprint for \method with Apollo-E.}
Bytes/param counts optimizer states only, not measured peak memory. Apollo-E compresses the additional exponent/multiplicative second-moment state introduced by \method using a rank-4 approximation.}
\label{tab:Apollo-main}
\begin{tabular}{@{}llcccc@{}}
\toprule
\textbf{Optimizer / state}
& \textbf{State precision}
& \multicolumn{2}{c}{\textbf{130M}}
& \multicolumn{2}{c}{\textbf{350M}} \\
\cmidrule(lr){3-4}\cmidrule(lr){5-6}
&
& \textbf{Bytes/param}
& \textbf{PPL}
& \textbf{Bytes/param}
& \textbf{PPL} \\
\midrule
AdamW, 
& BF16
& 4.000 & 23.587
& 4.000 & 19.156 \\
AdamW, 
& FP32
& 8.000 & 22.615
& 8.000 & 15.955 \\
\method, Apollo-E rank 4
& BF16
& 4.754 & 21.979
& 4.377 & 16.550 \\
\bottomrule
\end{tabular}
\end{table*}

\subsection{Toy Diagnostic: Isolating Low-Precision Failure Modes}
\label{sec:toy_example}

We begin with a minimal convex diagnostic designed to highlight the complementary strengths of additive and multiplicative optimizers. The task is matrix fitting,
\[
\mathcal{L}(W)=c  \|W-W^\star\|_F^4,
\]
where each step samples a minibatch of entries uniformly with replacement and $c$ is a constant. Each optimizer step is computed in FP32, but the updated parameter is immediately rounded through BF16 before the next iteration. Since the objective is otherwise trivial, any failure reflects the interaction between the update geometry and BF16 storage. 
This directly tests the complementary regimes isolated by the theory. For the evaluation, we perform a gradient step using either \additive or \multiplicative, corresponding to the additive and multiplicative branches of \combined, respectively. We disable momentum and adaptive moments and tune the learning rate for simplicity. For \combined we have $\eta_a = \eta_m$, and in the figures, the \additive and \multiplicative baselines have their optimal learning rate for each setting, whereas \combined reuses the same learning rate across all settings. 

Fig.~\ref{fig:toy_diag_1x4} summarizes four diagnostic regimes. In the sign-flip regime, additive steps are needed to change signs: \additive and \combined correct the sign, while \multiplicative remains sign-preserving. In the zero-revival regime, \multiplicative cannot escape from zero because its update is zero-absorbing, whereas \additive and \combined recover zero weights through the additive branch. In the small-weight regime, \combined retains the additive updates needed to reach the BF16-limited floor. In the large-weight regime, \additive stalls because its updates fall below the local BF16 spacing, while \multiplicative and \combined continue to make relative, scale-aware progress. Overall, the toy example isolates the intended complementarity: the additive branch enables sign changes, recovery from zero, and small-magnitude updates, while the multiplicative branch preserves progress when the quantization grid becomes too coarse. Full learning-rate sweeps and exact losses are provided in App.~\ref{app:toy-matrix-fitting}.

\subsection{Language-Model Pretraining Setup}
\label{sec:llm-setup}

We evaluate \method\ on LLaMA-style autoregressive language-model pretraining on C4/EN. For a model with \(P\) million trainable parameters, the \(1\times\) Chinchilla budget is $T_{1\times}\approx 20P\times 10^6 $ tokens, and the main scaling study evaluates \(n\times\) budgets for \(n\in\{1,2,4,8\}\). This design is important because optimizer rankings can depend on the data-to-model ratio, training horizon, and tuning protocol~\citep{wen2025fantastic,semenov2025benchmarking}. The 60M--350M models are evaluated across the full \(1\)--\(8\times\) grid. The 1B model is evaluated at \(1\times\) Chinchilla across all precision regimes to test whether the observed gains persist beyond the largest model used in our full scaling sweep.

We study four increasingly aggressive low-precision regimes:
\text{BF16/BF16},
\text{BF16/FP8},
\text{FP8/FP8},
\text{NVFP4/FP8},
where the notation denotes weight storage / compute precision. BF16/BF16 corresponds to end-to-end BF16 training without FP32 master weights. BF16/FP8 keeps BF16 weights but uses Transformer Engine FP8 GEMMs. FP8/FP8 further stores selected trainable weight matrices persistently in FP8 and uses FP8 compute. NVFP4/FP8 uses true NVFP4 master-weight storage with FP8 GEMMs. % \seb{what is TE?}

For the FP8 master-weight setting, we apply FP8 storage to attention and MLP linear layers, while embeddings, layer norms, and the language-model head remain in BF16. The selected matrices are quantized in blocks with FP16 scales. During training, the optimizer operates under the constraint that these weights are repeatedly quantized to FP8 after updates. In our FP8 experiments, this storage is simulated through explicit quantize/dequantize operations for compatibility with current Transformer Engine kernels. 
 
% In contrast, the NVFP4/FP8 experiments use true NVFP4 master-weight storage.

We compare against AdamW in each regime. In BF16/BF16, we also include two stronger additive baselines: AdamW with stochastic rounding and AdamW with Kahan-style compensated accumulation. Stochastic rounding probabilistically rounds low-precision updates to nearby representable values so that small updates are preserved in expectation~\citep{connolly2021srpbea}. Kahan-style compensated accumulation keeps an auxiliary correction buffer that accumulates low-order update residuals lost to rounding, thereby improving additive update accumulation under BF16 storage~\citep{zamirai2021revisitingbfloat16training}. %These baselines directly test whether the gains of \method\ are explained only by preserving small additive updates. 
We additionally report an FP32-weight/TF32-compute AdamW reference as a higher-precision point of comparison. %\seb{We need to explain SR and Kahan. We can add explanations in appendix and refer to these.}

\paragraph{Hyperparameter tuning.}

% \seb{add that we follow Fantastic optimizers papers setup Xiao: I think we already included this in the first paragraph}
We use a coordinate-descent protocol ~\citep{wen2025fantastic,semenov2025benchmarking} for each optimizer and precision regime. At \(1\times\) Chinchilla, we sweep learning rate, warmup length, \(\beta_1\), \(\beta_2\), \(\epsilon\), gradient clipping, and decoupled weight decay where applicable. For \method, we additionally sweep the additive and multiplicative learning rates. To control cost at larger budgets and model sizes, we identify the hyperparameters that are scaling-sensitive and re-sweep only those in the remaining regimes, while holding the others fixed. We accept a hyperparameter change only when it improves final validation loss by at least \(\Delta_{\mathrm{loss}}\ge 3\times 10^{-3}\). Full grids, sweep traces, and best configurations are reported in App.~\ref{abalations}.

\subsection{Scaling Across Budgets and Precision Regimes}
\label{sec:scaling-results}

Fig.~\ref{fig:all-regime-scaling-wide} reports C4/EN validation perplexity for 60M, 130M, and 350M models across \(1\)--\(8\times\) Chinchilla budgets. %Each panel overlays tuned AdamW and \method\ runs in BF16/BF16, BF16/FP8, and FP8/FP8, with AdamW+SR included in BF16/BF16.
 Each panel compares tuned AdamW and \method\ across the evaluated low-precision regimes.
 
Across model sizes, token budgets, and precision regimes, \method\ consistently improves over the corresponding AdamW baseline. The gap persists as the Chinchilla ratio increases, showing that the improvement is not merely an early-training effect. The same trend also holds as the precision regime becomes more aggressive. FP8 compute degrades absolute perplexity for both optimizers, and FP8 weights make the problem harder still, but \method\ retains a clear advantage. 
%This is consistent with the mechanism isolated in the toy diagnostic: when low-precision storage makes fixed additive updates less reliable, a relative scale update provides an additional path for progress.

% The scaling curves also show that the benefit of \method\ is not confined to a single model size. In the 60M and 130M regimes, \method\ improves the Pareto frontier across token budgets. At 350M, the gap becomes especially visible in the lower-perplexity region, where training is more sensitive to small numerical differences in weight updates. Overall, the scaling results support the conclusion that additive--multiplicative updates remain useful as both model size and training horizon increase.

\subsection{Main \(1\times\) Chinchilla Results}
\label{sec:final-results}

Table~\ref{tab:final-all-1x} summarizes the main \(1\times\) Chinchilla results across 60M, 130M, 350M, and 1B models. \method\ improves over AdamW in every low-precision regime and at every model size.
In BF16/BF16, \method\ reduces perplexity relative to AdamW by \(2.8\%\), \(6.6\%\), \(13.3\%\), and \(9.6\%\) for 60M, 130M, 350M, and 1B models, respectively. It also improves over AdamW+Kahan and AdamW+SR. %This comparison is important:
Kahan compensation and stochastic rounding both help additive updates survive low-precision accumulation, but neither changes the underlying additive update geometry. This suggests that explicitly separating additive and multiplicative update mechanisms may provide benefits beyond improvements to the additive update alone, as in SR or Kahan compensation.
%The fact that \method\ remains better indicates that the multiplicative branch contributes more than simply recovering small additive increments.

When compute is reduced to FP8 while weights remain BF16, \method\ continues to improve over AdamW, with relative perplexity reductions of \(3.2\%\), \(6.2\%\), \(6.9\%\), and \(4.0\%\). %This shows that the method is compatible with FP8 compute pipelines and does not rely on BF16 GEMMs to obtain its gains.
The FP8-weight/FP8-compute regime is more challenging because the optimizer must repeatedly update weights that are quantized to FP8. Even in this setting, \method\ improves over AdamW by \(3.4\%\), \(2.7\%\), \(10.5\%\), and \(3.0\%\). The 350M result is especially notable: \method\ with FP8 weights and FP8 compute reaches 19.395 perplexity, narrowing the gap to BF16-weight AdamW despite using a coarser weight format.

%Finally, the NVFP4/FP8 setting provides the strongest stress test. Here, weights are stored in true NVFP4 while compute uses FP8 GEMMs.
The NVFP4/FP8 regime amplifies the failure mode of purely additive updates because the representable bins are even coarser. \method\ gives its largest gains in this setting, reducing perplexity over AdamW by \(8.7\%\), \(7.3\%\), \(16.6\%\), and \(7.2\%\) for 60M, 130M, 350M, and 1B models. These results directly support our central claim: as weight precision becomes more restrictive, additive--multiplicative updates become increasingly valuable.

Finally, the 1B results suggest that the gains extend beyond the 60M--350M sweep: even at \(1\times\) Chinchilla, \method\ improves over AdamW across all evaluated precision regimes.

% Finally, the 1B results further strengthen this conclusion. Although the 1B experiments are evaluated only at \(1\times\) Chinchilla, \method\ improves over AdamW in all regimes, including BF16/BF16, BF16/FP8, FP8/FP8, and NVFP4/FP8. %Thus, the gains are not restricted to the 60M--350M sweep range.

\subsection{Hyperparameter Transfer and Scaling Sensitivity}
\label{sec:hparam-transfer}

\method\ introduces additional hyperparameters relative to AdamW, especially separate learning rates for the additive and multiplicative branches. A natural concern is whether these require exhaustive re-tuning across model sizes and token budgets. Our coordinate-descent results suggest that they do not: after tuning a base configuration, only a small subset of hyperparameters is sufficiently scaling-sensitive to warrant re-sweeping, while the remaining settings transfer across regimes. 
For the NVFP4 experiments, we also investigated whether the FP4 block scale should influence the relative contribution of the additive and multiplicative updates, since the block scale changes the effective numerical resolution during training. In practice, we found that using fixed global coefficients for the two update branches is sufficient, and therefore keep this simpler formulation in the final algorithm. Additional analysis of the block-scale effect is provided in ~\ref{abalations}.
% \seb{I think this paragraph needs a bit more clarification. I don't think “branch weighting” and “learned gating mechanism” are fully clear without additional context. Also, never refer to appedix, tables, figures, etc with plain text - instead use "ref" command}

This transferability is consistent with the optimizer design. 
The multiplicative branch controls relative scale changes, while the additive branch provides local weight-space updates, so a single learning rate does not need to simultaneously handle both scale adaptation and local corrections.
Under a matched restricted re-tuning protocol, in which both the baselines and
\method\ re-sweep only the hyperparameters identified as scaling-sensitive
(Table~\ref{tab:scaling-sensitive-hparams}), \method\ continues to achieve
lower validation perplexity across the regimes reported in
App.~\ref{abalations}. This suggests that the improvements are not simply an
artifact of giving \method\ a larger hyperparameter search space, but persist
under a comparable tuning budget.

\subsection{Optimizer-State Overhead and Apollo-E Compression}
\label{sec:Apollo-results}

\method\ removes FP32 master weights but adds one optimizer state: a second-moment accumulator for the multiplicative branch. Since prior work shows that optimizer states can often be compressed with low-rank approximations~\citep{zhu2025apollo, zhao2024galore, tensorgrad}, we test whether this added state can be compressed independently.

Apollo~\citep{zhu2025apollo} compresses optimizer states using low-rank gradient projections, storing and updating the optimizer statistics in a reduced-dimensional subspace. We apply Apollo only to the exponent/multiplicative second-moment accumulator; we refer to this variant as Apollo-E, where ``E'' denotes the exponent branch. We use rank \(r=4\) for both 130M and 350M models. The additive AdamW-style branch and the additive--multiplicative update rule are otherwise unchanged. Table~\ref{tab:Apollo-main} shows that Apollo-E keeps the optimizer-state footprint close to BF16-state AdamW while retaining most of \method's perplexity gains. Detailed Apollo-E variants and rank sweeps are given in App.~\ref{app:Apollo-compression}.

\section{Limitations and Future Work}

\method\ focuses on optimization geometry for low-precision weights rather than optimizer-state compression or systems-level acceleration. The current prototype introduces one additional optimizer state for the multiplicative branch and incurs runtime overhead relative to AdamW, e.g. roughly \(6\%\!-\!9\%\) at 1B scale (App.~\ref{app:compute-resources}). At lower precision, overhead also comes from simulating FP8/FP4 weight storage through explicit quantize/dequantize steps, since fully native low-bit parameter-update support is still limited in current training stacks. Our Apollo-E results suggest that the additional state can be compressed independently, while fused kernels and native low-precision update paths could further reduce the cost of the multiplicative branch.

Our theoretical analysis studies an idealized additive--multiplicative update and does not fully model practical ingredients such as stochastic gradients, Adam-style moments, clipping, or distributed training. Finally, while we evaluate up to 1B-parameter LLaMA-style models across BF16, FP8, and FP4 regimes, larger-scale studies, longer-horizon FP8/FP8 training, and broader downstream evaluations remain important future work.
Another natural direction is to study whether \method\ can be combined with stochastic rounding or Kahan-style compensated accumulation.

% \method\ focuses on optimizer geometry for low-precision weights, rather than on optimizer-state compression or systems-level acceleration. The current implementation may introduce additional optimizer state for the multiplicative branch, although our Apollo-E experiment suggests that this state can be compressed independently. Our theory also isolates an idealized additive-multiplicative update and does not fully model all ingredients of large-scale training, such as stochasticity, momentum, clipping, quantization noise, and distributed execution. Finally, while our experiments cover 60M-1B parameter LLaMA-style models across BF16, FP8, and FP4 regimes, broader downstream evaluation and larger-scale studies remain important future work.

\section{Conclusion}
\label{sec:conclusion}

We present \method, an optimizer that combines additive Adam-style updates with multiplicative Madam-style updates. This is effective under low-precision arithmetic: the additive branch provides local updates, while the multiplicative branch makes scale-aware relative updates in cases when the additive updates are rounded off to zero. Our analysis provides an idealized descent guarantee, and our experiments show that this geometry targets a concrete failure mode of low-precision training. Additive updates can get lost to rounding under coarse quantization, while multiplicative updates struggle with zeros and sign changes; combining them mitigates both failure modes. Across 60M-1B parameter LLaMA-style models, multiple Chinchilla budgets, and BF16/FP8/FP4 regimes, \method\ consistently improves over AdamW, with the largest gains in the most aggressive low-precision settings. Overall, \method\ shows that low-precision training benefits from optimizers aligned with floating-point structure. By directly addressing quantization-induced update failures, it provides a principled step toward end-to-end low-precision training without relying on FP32 master weights.

\section*{Impact Statement}
This work aims to advance the field of machine learning by enabling more efficient and stable low-precision training of large neural networks. While improved training efficiency may have downstream environmental and economic implications, we do not foresee any direct negative societal impacts arising specifically from this work.

\section*{Acknowledgements}
Sebastian Loeschcke is supported by the Danish Data Science Academy, which is funded by the Novo Nordisk Foundation (NNF21SA0069429) and VILLUM FONDEN (40516).
Anima Anandkumar is supported in part by the Bren endowed chair, and the AI2050 Senior Fellow program at Schmidt Sciences.
We acknowledge TACC and Caltech HPC for computing resources.

\FloatBarrier

% \newpage
%\nocite{langley00} \ Mads remove, no reference found
\bibliographystyle{icml2026}
\bibliography{example_paper}

%%%%%%%%%%%%%%%%%%%%%%%%%%%%%%%%%%%%%%%%%%%%%%%%%%%%%%%%%%%%%%%%%%%%%%%%%%%%%%%
%%%%%%%%%%%%%%%%%%%%%%%%%%%%%%%%%%%%%%%%%%%%%%%%%%%%%%%%%%%%%%%%%%%%%%%%%%%%%%%
% APPENDIX
%%%%%%%%%%%%%%%%%%%%%%%%%%%%%%%%%%%%%%%%%%%%%%%%%%%%%%%%%%%%%%%%%%%%%%%%%%%%%%%
%%%%%%%%%%%%%%%%%%%%%%%%%%%%%%%%%%%%%%%%%%%%%%%%%%%%%%%%%%%%%%%%%%%%%%%%%%%%%%%
\newpage
\appendix
\onecolumn
\appendix

\section{Detailed Algorithm}
\label{app:clipping}

We provide pseudocode for our method in Algorithm~\ref{alg:mp-adam-full}.
Our presentation treats \method\ as a \emph{weight-space} optimizer with two coupled update components: an additive component and a multiplicative component. All operations are performed elementwise.

At each step, we first compute the stochastic gradient
\[
g = \nabla_w \Lcal(w).
\]
The additive component follows an Adam-style update in weight space, using first and second moments to form a preconditioned direction and an additive update \(u^{(a)}\).
The multiplicative component is designed to capture relative changes in the weights: it starts from the scale-sensitive quantity
\[
g^{(e)} = (\ln 2)\,w g,
\]
applies variance normalization, and converts the resulting proposal into a relative multiplicative update \(u^{(m)}\).
For optional exponent-step clipping, this relative change can be mapped into a log-base-2 multiplicative increment
\[
\Delta e = \log_2(1+u^{(m)}).
\]
The two components are then composed directly in weight space as
\[
w^{+} = w + w u^{(m)} + u^{(a)}.
\]

The additive component keeps Adam-style first and second moments, whereas the multiplicative component uses second-moment normalization only.
This design preserves the intended additive--multiplicative behavior while improving numerical stability in finite precision.
We also allow optional multiplicative-step clipping, multiplicative weight decay, additive weight decay, and optional final weight clipping.
\[
\mathrm{clamp}_a(x)=\max(-a,\min(x,a))
\]
where needed.
This differs from standard global gradient-norm clipping: our cap acts on the multiplicative update component itself, rather than on the raw gradient.
A small \(\epsilon\) stabilizes divisions in both components, and a small threshold \(\tau>0\) is used when converting multiplicative weight proposals into relative changes near zero.

\begin{algorithm}[H]
  \caption{M{+}Adam: additive--multiplicative updates}
  \label{alg:mp-adam-full}
\small
\begin{algorithmic}[1]
  \STATE \textbf{Inputs:} additive and multiplicative step sizes $\eta_a,\eta_m$; moments $\beta_1,\beta_2$; stabilizer $\epsilon$; small constant $\tau>0$; optional clipping thresholds $g_{\max}, d_{\mathrm{mult},\max}, w_{\max}$
  \STATE \textbf{State:} additive moments $(\mu,\nu)\gets \mathbf{0}$, multiplicative second moment $\nu^{(e)}\gets \mathbf{0}$
  \STATE \textbf{Helper:} $\mathrm{clamp}_{a}(x)\triangleq \max(-a,\min(x,a))$ \hfill (elementwise)

  \REPEAT
    \STATE $g \gets \nabla_w \Lcal(w)$ \COMMENT{stochastic gradient in weight space}

    \STATE $\mu \gets \beta_1 \mu + (1-\beta_1) g$
    \STATE $\nu \gets \beta_2 \nu + (1-\beta_2) g^2$
    \STATE $u_{\mathrm{add}} \gets \mu/(\sqrt{\nu}+\epsilon)$ \COMMENT{Adam-style preconditioned additive direction}
    \STATE $u^{(a)} \gets -\eta_a\,u_{\mathrm{add}}$

    \STATE $g^{(e)} \gets (\ln 2)\,w g$ \COMMENT{coordinate-wise gradient with respect to exponent signal}
    \STATE $\nu^{(e)} \gets \beta_2 \nu^{(e)} + (1-\beta_2) \bigl(g^{(e)}\bigr)^2$
    \STATE $u_{\mathrm{mult}} \gets \mathrm{clamp}_{g_{\max}}\!\left(g^{(e)}/(\sqrt{\nu^{(e)}}+\epsilon)\right)$
    \STATE $\widetilde u^{(m)} \gets -\eta_m\,u_{\mathrm{mult}}$

    \STATE $\rho \gets \max(|w|,\tau)$
    \STATE $u^{(m)} \gets \widetilde u^{(m)} / \rho$ \COMMENT{convert multiplicative proposal to relative change}
    \STATE $u^{(m)} \gets \max(-(0.75-\varepsilon_0),\min(0.75,u^{(m)}))$ \COMMENT{keep $\log(1+u^{(m)})$ stable}
    \STATE $\Delta e \gets \log_2(1+u^{(m)})$
    \STATE $\Delta e \gets \mathrm{clamp}_{d_{\mathrm{mult},\max}}(\Delta e)$ \COMMENT{optional per-step cap}
    \STATE $u^{(m)} \gets 2^{\Delta e}-1$ \COMMENT{relative multiplicative update after clipping}

    \STATE $w \gets w + w u^{(m)} + u^{(a)}$
    \COMMENT{compose additive shift and multiplicative rescaling directly in weight space}

    \IF{using weight clipping}
      \STATE $w \gets \mathrm{clamp}_{w_{\max}}(w)$ \COMMENT{$w\in[-w_{\max},w_{\max}]$}
    \ENDIF
  \UNTIL{converged}
\end{algorithmic}
\end{algorithm}

\section{BF16-Stored Matrix Fitting to Isolate Low-Precision Failure Modes}
\label{app:toy-matrix-fitting}

We describe the minimal convex diagnostic used to isolate how update geometry interacts with low-precision weight storage. The goal is explanatory rather than competitive: we remove data and architecture and study only whether an optimizer update survives repeated BF16 storage.

\paragraph{Setup.}
We minimize
\begin{equation}
\min_{W\in\mathbb{R}^{n\times n}}
\mathcal{L}(W)
=
c \|W-W^\star\|_F^4,
\label{eq:toy_obj_app}
\end{equation}
We adopt this loss rather than the typical squared error, to prevent gradient descent from converging in a single step. For this reason we also introduce stochastic samples of the entries and calculate the loss as the average of the sampled entries. We pick \(B\) samples uniformly at random with replacement.
Each step is computed in FP32, but the parameter is immediately rounded through BF16 before the next iteration:
\begin{equation}
\widetilde{W}_{t+1}=\mathrm{Update}(W_t,G_t),
\qquad
W_{t+1}=\mathrm{RoundBF16}(\widetilde{W}_{t+1}).
\label{eq:bf16_roundtrip_app}
\end{equation}
Thus, failures in this experiment come only from the interaction between the update rule and BF16 storage.

\paragraph{Update rules.}
\additive is a simple additive gradient step:
\begin{equation}
u_t^{(a)}
=
-\eta_a g_t .
\label{eq:adam_add_app}
\end{equation}

For \multiplicative, we first form the coordinate-wise gradient with respect to exponent signal
\[
g_t^{(e)}
=
(\ln 2)\,w_t g_t .
\]
We then apply the multiplicative update
\[
\widetilde u_t^{(m)}
=
-\eta_m g_t^{(e)},
\]
\[
\rho_t
=
\max(|w_t|,\tau),
\]
\[
u_t^{(m)}
=
\operatorname{clip}
\left(
\frac{\widetilde u_t^{(m)}}{\rho_t},
-r_{\max},r_{\max}
\right),
\]
\[
w_{t+1}
=
\operatorname{bf16}
\left(
w_t+w_t u_t^{(m)}
\right).
\]
We set \(r_{\max}=0.75\) and \(\tau=10^{-8}\).

For \combined, we follow the same steps as above but apply both branches:
\[
w_{t+1}
=
\operatorname{bf16}
\left(
w_t+w_t u_t^{(m)}+u_t^{(a)}
\right).
\]

\paragraph{Diagnostic regimes.}
We use five regimes, each chosen to expose a specific low-precision failure mode:
\begin{enumerate}
\item \textbf{Sign flip:} success requires changing signs.
\item \textbf{Zero revival:} success requires escaping exact zero initialization.
\item \textbf{Small weights:} success requires fine local updates near zero.
\item \textbf{Large weights:} success requires updates that remain effective under coarse BF16 spacing.
\item \textbf{Mixed scales:} success requires handling small and large coordinates simultaneously.
\end{enumerate}

\begin{itemize}
      \item \textbf{Sign flip.}
      \[
      n=4096,\qquad c=1.
      \]
      Targets are random signs:
      \[
      w_i^\star \in \{-1,+1\},
      \]
      and the initialization has the opposite sign:
      \[
      w_{0,i}=-w_i^\star.
      \]

      \item \textbf{Zero revival.}
      \[
      n=4096,\qquad c=1.
      \]
      Targets are random signs:
      \[
      w_i^\star \in \{-1,+1\},
      \]
      and all weights start exactly at zero:
      \[
      w_{0,i}=0.
      \]

      \item \textbf{Small weights.}
      \[
      n=4096,\qquad c=10^8.
      \]
      Targets are small Gaussian values:
      \[
      w_i^\star = 10^{-4} z_i,\qquad z_i\sim\mathcal{N}(0,1),
      \]
      and all weights start at zero:
      \[
      w_{0,i}=0.
      \]

      \item \textbf{Large weights.}
      \[
      n=4096,\qquad c=10^{-7}.
      \]
      Targets are positive large-scale values:
      \[
      w_i^\star = 900\max(|z_i|,1/900),
      \qquad z_i\sim\mathcal{N}(0,1),
      \]
      so \(w_i^\star\ge 1\). All weights start at
      \[
      w_{0,i}=100.
      \]

      \item \textbf{Mixed scales.}
      \[
      n=4096.
      \]
      The first half uses the small-weights regime:
      \[
      i\le 2048:\qquad
      w_i^\star = 10^{-4}z_i,\quad
      w_{0,i}=0,\quad
      c_i=10^8.
      \]
      The second half uses the large-weights regime:
      \[
      i>2048:\qquad
      w_i^\star = 900\max(|z_i|,1/900),\quad
      w_{0,i}=100,\quad
      c_i=10^{-7}.
      \]
  \end{itemize}

For fairness, we tune learning rates separately for each optimizer and regime. 
For the main diagnostic figure, \combined{} sets $\eta_a=\eta_m$ and uses one fixed shared learning rate across all regimes. 
The additive component handles sign changes, recovery from zero, and small-magnitude updates, while the multiplicative component controls scale-aware progress at large magnitudes. 
Fig.~\ref{fig:toy_lr_sweep_curves} shows the best sweep curves.

\begin{figure}
    \centering
    \includegraphics[width=0.95\linewidth]{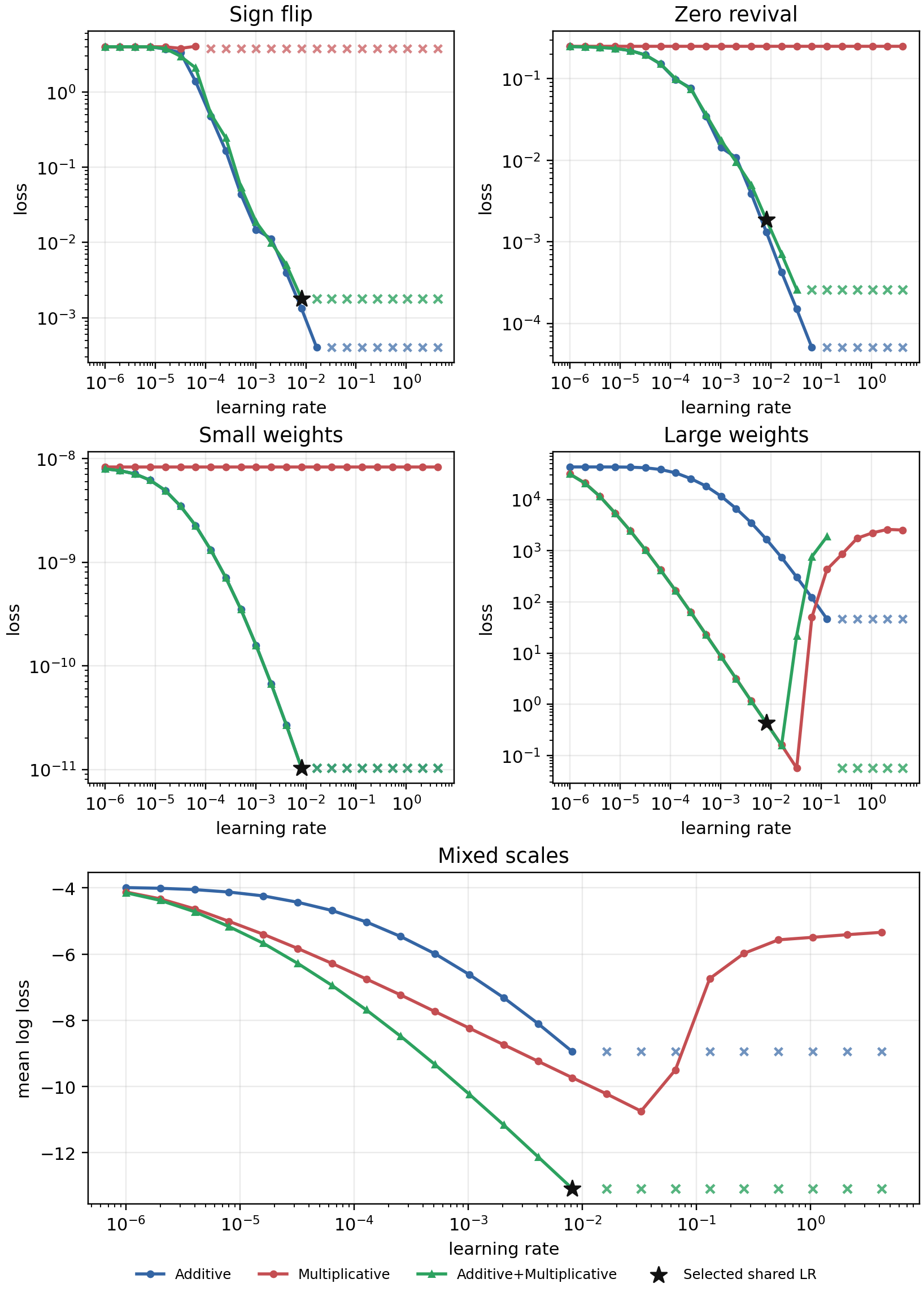}
    \caption{
  Learning-rate sweep curves for the BF16-stored matrix-fitting diagnostic.}
    \label{fig:toy_lr_sweep_curves}
\end{figure}

\paragraph{Main observations.}
The results support four qualitative conclusions. First, multiplicative-only updates fail when the solution requires sign changes or escaping exact zero initialization; these are structural limitations, not learning-rate artifacts. Second, additive updates are effective for sign changes, zero revival, and local updates, but can be rounded away at large magnitudes because BF16 spacing grows with \(|W_t|\). Third, mixed-scale problems expose a conflict for purely additive updates: a step size that affects large entries can be too coarse for small entries, while a step size that updates small entries may not survive rounding for large entries. Finally, M{+}Adam avoids the main weakness of either branch alone: the additive component handles local updates, sign changes, and zero revival, while the multiplicative component gives scale-aware progress at large magnitudes.

\paragraph{Summary.}
This toy problem is intentionally simple, but it provides a plausible mechanism behind the main empirical LLM results. Purely additive updates fail when quantization bins become too wide; purely multiplicative updates fail near zero and across sign changes. M{+}Adam combines the two geometries, giving relative progress at large magnitudes while retaining the flexibility of additive optimization near zero.

\FloatBarrier

\section{Hyperparameter Sensitivity}
\label{app:hyp_sensitivity}
We claim reduced hyperparameter sensitivity in BF16. As direct evidence, we report learning-rate sensitivity for optimizers AdamW and \method under the same BF16 training setting and token budget (1$\times$Chinchilla), across three model sizes (60M/130M/350M). For AdamW we sweep the global learning rate. For \method we separately sweep the mantissa step size lr$_m$ and exponent step size lr$_e$ around the tuned configuration (holding all other hyperparameters fixed as in the sweep tables). We plot the resulting validation losses; for readability under occasional divergence/outliers, the visualization uses a log-scaled view of loss deviations so that stable regions remain visually comparable. These 1D sweeps provide a minimal, direct sensitivity check supporting the claim of improved robustness.
\begin{figure*}[htbp]
  \centering
  \includegraphics[width=0.8\textwidth]{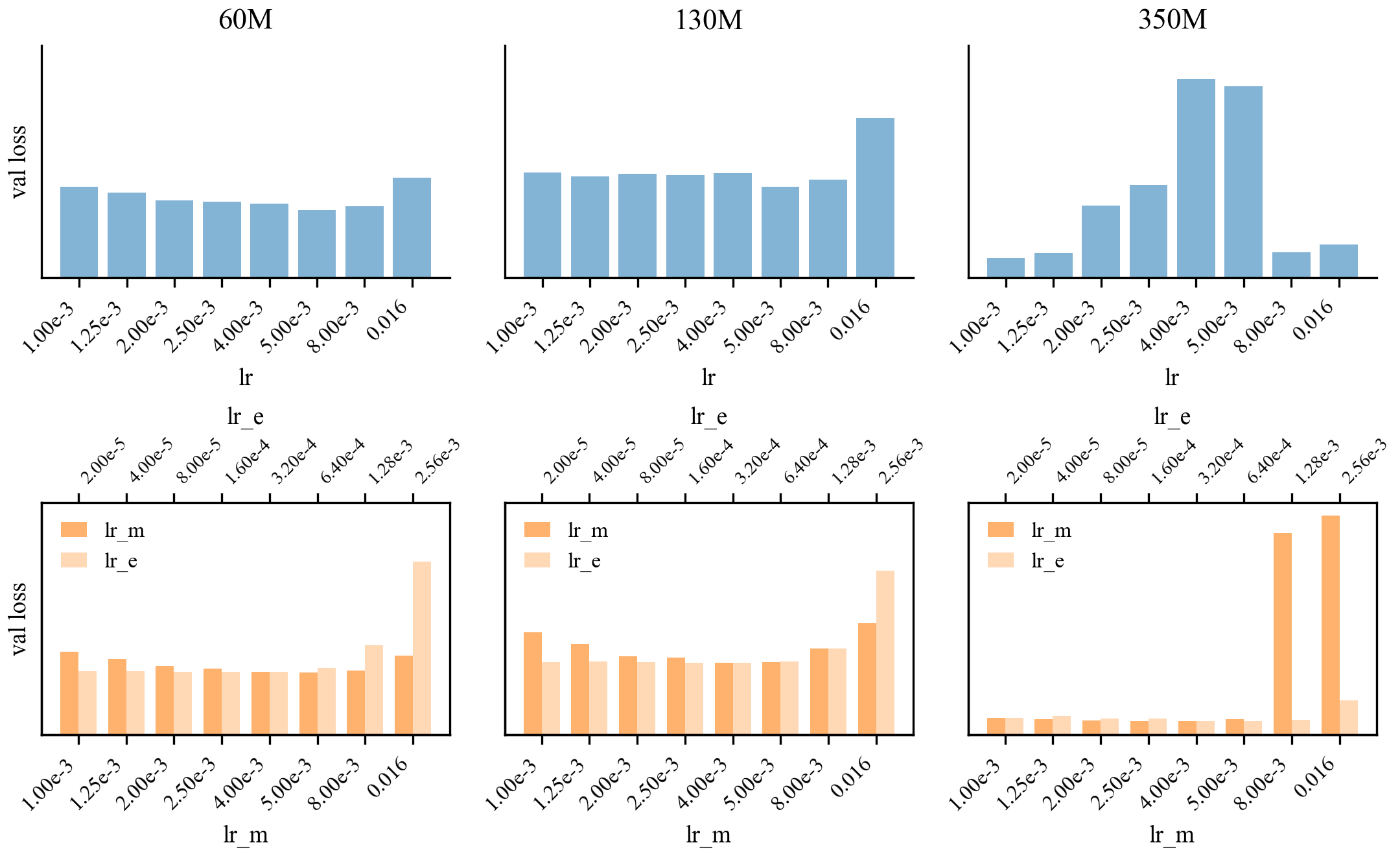}
  \caption{\textbf{Learning-rate sensitivity in BF16 (AdamW vs.\ M{+}Adam).}
  Columns correspond to model sizes (60M, 130M, 350M).
  The top row shows AdamW validation loss versus learning rate.
  The bottom row shows M{+}Adam, where the bottom x-axis sweeps lr$_m$ and the top x-axis sweeps lr$_e$; bars correspond to the same sweep index for the paired rates.
  Overall, M{+}Adam exhibits a stable region across learning-rate choices.}
  \label{fig:lr_sensitivity_bf16_grid}
\end{figure*}

\FloatBarrier
\section{Proofs}
\label{app:proofs}
\subsection{Proof of the Scale-Then-Add Descent Theorem}
\label{app:proof-scale-add-descent}

\begin{proof}[Proof of Theorem~\ref{thm:scale_add_descent_maintext}]
Since \(z=\log_2 s\), the update can be written as
\[
    w_{t+1}=w_t 2^z+u^{(a)}.
\]
Applying the local smoothness bound in Eq.~\eqref{eq:scale-add-smooth} with
\(d=u^{(a)}\) gives
\[
\begin{aligned}
    \mathcal L(w_{t+1})
    \leq\;&
    \mathcal L(w_t)
    + \langle g,u^{(a)}\rangle
    + \langle g^{(e)},z\rangle \\
    &+ \frac{L_a}{2}\|u^{(a)}\|^2
    + \frac{L_e}{2}\|z\|^2
    + L_{ae}\|u^{(a)}\|\,\|z\| .
\end{aligned}
\]
Using the descent-alignment and size assumptions yields
\[
\begin{aligned}
    \mathcal L(w_{t+1})
    \leq\;&
    \mathcal L(w_t)
    -\eta_a\|g\|^2
    -\eta_m\|g^{(e)}\|^2 \\
    &+
    \frac{L_a}{2}\eta_a^2\|g\|^2
    +
    \frac{L_e}{2}\eta_m^2\|g^{(e)}\|^2 \\
    &+
    L_{ae}\eta_a\eta_m\|g\|\,\|g^{(e)}\| .
\end{aligned}
\]
By Young's inequality,
\[
    \|g\|\,\|g^{(e)}\|
    \leq
    \frac{1}{2}\left(\|g\|^2+\|g^{(e)}\|^2\right).
\]
Therefore,
\[
\begin{aligned}
    \mathcal L(w_{t+1})
    \leq\;&
    \mathcal L(w_t)
    -\eta_a\|g\|^2
    -\eta_m\|g^{(e)}\|^2 \\
    &+
    \frac{L_a}{2}\eta_a^2\|g\|^2
    +
    \frac{L_e}{2}\eta_m^2\|g^{(e)}\|^2 \\
    &+
    \frac{L_{ae}}{2}\eta_a\eta_m
    \left(\|g\|^2+\|g^{(e)}\|^2\right).
\end{aligned}
\]
Collecting the coefficients of \(\|g\|^2\) and \(\|g^{(e)}\|^2\) gives
\[
\begin{aligned}
    \mathcal L(w_{t+1})
    \leq\;&
    \mathcal L(w_t) \\
    &-
    \frac{\eta_a}{2}
    \left(2-L_a\eta_a-L_{ae}\eta_m\right)\|g\|^2 \\
    &-
    \frac{\eta_m}{2}
    \left(2-L_e\eta_m-L_{ae}\eta_a\right)\|g^{(e)}\|^2 .
\end{aligned}
\]
The learning-rate conditions make both bracketed coefficients positive. If
\(g\neq0\), then the additive descent term is strictly negative, so
\(\mathcal L(w_{t+1})<\mathcal L(w_t)\). This proves the claim.
\end{proof}

\vskip 1.0em
\noindent\textbf{Relation to Algorithm~\ref{alg:mp-adam}.}
Algorithm~\ref{alg:mp-adam} computes an additive update \(u^{(a)}\) and a
relative multiplicative update \(u^{(m)}\). With
\[
    s=1+u^{(m)},
\]
its core composition is
\[
\begin{aligned}
    w_{t+1}
    &=
    w_t+w_t u^{(m)}+u^{(a)} \\
    &=
    w_t s+u^{(a)} .
\end{aligned}
\]
When \(s>0\) elementwise, this is equivalently
\[
    w_{t+1}=w_t 2^z+u^{(a)},
    \qquad z=\log_2 s .
\]
The theorem analyzes this same scale-then-add map in the local coordinates
\((d,z)\), with \(d=u^{(a)}\). The adaptive moments, normalization,
thresholding, and clipping in Algorithm~\ref{alg:mp-adam} determine the
concrete branch outputs \(u^{(a)}\) and \(u^{(m)}\); the descent statement only
requires these outputs to satisfy the stated alignment and size conditions.
Thus the theorem captures the local additive--multiplicative geometry of the
algorithm without committing to a particular implementation of the branch
preconditioners.

\newpage
\section{Model Architectures} \label{models} 
\begin{table*}[h]
\centering
\footnotesize
\setlength{\tabcolsep}{10pt}
\renewcommand{\arraystretch}{1.1}
\begin{tabular}{@{}lrrrrrr@{}}
\toprule
\textbf{Model} & \textbf{Params} & \textbf{Max Seq Len} & \textbf{Hidden Dim} & \textbf{Inter Dim} & \textbf{\# Layers} & \textbf{\# Heads} \\
\midrule
LLaMA-60M  & 60M  & 1024 & 512  & 1376 & 8  & 8  \\
LLaMA-130M & 130M & 1024 & 768  & 2048 & 12 & 12 \\
LLaMA-350M & 350M & 1024 & 1024 & 2736 & 24 & 16 \\
LLaMA-1B   & 1B   & 1024 & 2048 & 5461 & 24 & 32 \\
\bottomrule
\end{tabular}
\caption{Detailed architecture hyperparameters for each model size we studied.}
\label{tab:model-arch}
\end{table*}

\begin{figure}[H]
\begin{tikzpicture}[
  font=\small,
  >=Latex,
  box/.style={
    draw=NodeStroke, line width=0.7pt,
    rounded corners=2.5pt,
    fill=FillData,
    minimum height=10mm,
    minimum width=18mm,
    align=center
  },
  kernel/.style={
    draw=NodeStroke, line width=0.7pt,
    rounded corners=10pt,
    fill=FillKernel,
    minimum height=12mm,
    minimum width=34mm,
    align=center
  },
  pbox/.style={ % parameter/state box
      box,
      fill=FillParam
    },
  op/.style={
    draw=NodeStroke, line width=0.7pt,
    circle, fill=FillOp, minimum size=7mm, inner sep=0pt
  },
  wire/.style={-Latex, line width=0.8pt, draw=NodeStroke},
  update/.style={-Latex, dashed, line width=1.0pt, draw=black!55},
  bus/.style={-Latex, line width=0.8pt, draw=NodeStroke, rounded corners=4pt},
  tag/.style={font=\itshape\footnotesize, text=TagColor},
  gtitle/.style={font=\bfseries, text=TitleColor},
  gradtag/.style={font=\bfseries\scriptsize, text=black!60},
  acc/.style={font=\bfseries\footnotesize, text=TagColor},
  wlabel/.style={font=\itshape\footnotesize, text=black!55, fill=white, inner sep=1pt},
  lane/.style={
      draw=black!22, line width=0.6pt,
      rounded corners=6pt,
      fill=blue,
      fill opacity=1,   % <-- transparency
      inner sep=7pt
  },
  badge/.style={
      circle,
      draw=black!55,
      fill=white,
      line width=0.6pt,
      inner sep=1.2pt,
      font=\bfseries\scriptsize,
      text=black!70
    },
  ktitle/.style={font=\bfseries\small, text=black!80},
  keq/.style={font=\scriptsize, text=black!80},
  knote/.style={font=\scriptsize\itshape, text=black!80}
]

% ------------------------------------------------------------
% NODE PLACEMENT (matches the reference layout)
% ------------------------------------------------------------
\node[box]    (inp)   at (0.0,  2.2) {Input\\[-1mm]\textbf{BF16}\\\scriptsize($X$)};
\node[kernel] (fprop) at (4.3,  2.2) {};
\node[box]    (out)   at (8.5,  2.2) {Output\\[-1mm]\textbf{BF16}\\\scriptsize($Y$)};
\node[kernel] (wgrad) at (11.6, 2.2) {};
\node[box] (wg) at (15.0,2.2) {Weight\\Gradient\\[-1mm]\textbf{BF16}\\[-0.5mm]\scriptsize($dW$)};

\node[pbox]    (w)     at (4.3,  0.3) {Weight\\[-1mm]\textbf{BF16}};
\node[pbox]    (opt)   at (15.0, 0.3) {Optimizer\\States\\[-1mm]\textbf{BF16}};

\node[box] (ing) at (0.0,-1.5) {Input\\Gradient\\[-1mm]\textbf{BF16}\\[-0.5mm]\scriptsize($dX$)};

\node[kernel] (dgrad) at (4.3, -1.5) {};
\node[box]    (outg)  at (8.5,-1.5) {Output\\Gradient\\[-1mm]\textbf{BF16}};

% ------------------------------------------------------------
% KERNEL INTERNAL ICONS + LABELS (match reference: x then Σ, etc.)
% ------------------------------------------------------------
% Fprop:  x -> Σ  (acc FP32)
% ------------------------------------------------------------
% KERNEL CONTENT (equation + accumulator note, inside each GEMM box)
% ------------------------------------------------------------

% Forward GEMM
\node[ktitle] at ($(fprop.north)+(0,-2mm)$) {Forward GEMM};
\node[keq]    at ($(fprop.center)+(0,-0.5mm)$) {$Y = XW$};
\node[knote]  at ($(fprop.south)+(0,2.2mm)$) {acc FP32};

% dW GEMM
\node[ktitle] at ($(wgrad.north)+(0,-2mm)$) {dW GEMM};
\node[keq]    at ($(wgrad.center)+(0,-0.5mm)$) {$dW = X^{\top} dY$};
\node[knote]  at ($(wgrad.south)+(0,2.2mm)$) {acc FP32};

% dX GEMM
\node[ktitle] at ($(dgrad.north)+(0,-2mm)$) {dX GEMM};
\node[keq]    at ($(dgrad.center)+(0,-0.5mm)$) {$dX = dY\,W^{\top}$};
\node[knote]  at ($(dgrad.south)+(0,2.2mm)$) {acc FP32};

% --- LANES (background rectangles only) ---
% --- LANES (transparent background rectangles) ---
\begin{pgfonlayer}{background}
  \node[lane, fill=LaneAct,  fill opacity=0.9, fit=(inp)(out)(ing)(outg)] (laneA) {};
  \node[lane, fill=LaneKer, fill opacity=0.2, fit=(fprop)(wgrad)(dgrad)] (laneK) {};
  \node[lane, fill=LaneSt,  fill opacity=0.3, fit=(w)(opt)(wg)]     (laneS) {};
\end{pgfonlayer}

% --- lane titles inside lanes ---
\node[tag, anchor=west]      at ($(laneA.west)+(0,24pt)$) {Activations / grads};
\node[tag, anchor=south east] at ($(laneK.south east)+(-2pt,2pt)$) {GEMM kernels};
\node[tag, anchor=south]      at ($(laneS.south)+(-1,2pt)$) {Weights / optimizer state};

% --- Step badges ---
\node[badge] at ($(fprop.north west)+(2mm,-2mm)$) {1};

\node[badge] at ($(wgrad.north west)+(2mm,-2mm)$) {2};
\node[badge] at ($(dgrad.north west)+(2mm,-2mm)$) {2};

% ------------------------------------------------------------
% WIRES (ROUTED TO MATCH THE REFERENCE EXACTLY)
% ------------------------------------------------------------

% Input -> Fprop
\draw[wire] (inp.east) -- node[tag, above] {To BF16} (fprop.west);

% Fprop -> Output
\draw[wire] (fprop.east) -- node[tag, above] {To BF16} (out.west);

% Weight -> Fprop (vertical up into kernel bottom)
\draw[bus] (w.north) -- node[tag, right] {To BF16} (fprop.south);

% Weight -> Dgrad (vertical down into kernel top)
\draw[bus] (w.south) -- (dgrad.north);

% OutputGrad -> Dgrad (horizontal into kernel right)
\draw[wire] (outg.west) -- node[tag, above] {To BF16} (dgrad.east);

% Dgrad -> InputGrad (horizontal out)
\draw[wire] (dgrad.west) -- node[tag, above] {To BF16} (ing.east);

% ---- Top bus: Input -> Wgrad (BF16 activations), same “big loop” shape ----
\coordinate (TopL) at ($(inp.north)+(0,1.0)$);
\coordinate (TopR) at ($(wgrad.north)+(0,0.98)$);
\draw[bus] (inp.north) -- (TopL) -- node[tag, above] {BF16 activations} (TopR) -- (wgrad.north);

% ---- Vertical up from OutputGrad to Wgrad (same as reference) ----
% (go up a bit, then over, then up into Wgrad bottom)
\coordinate (MidUp) at ($(outg.north)+(0,2)$);
\coordinate (MidJoin) at ($(wgrad.south)+(0,-0.55)$);
\draw[bus] (outg.north) -- (MidUp) -- node[tag, above] {To BF16} (MidJoin) -- (wgrad.south);

% Wgrad -> Weight Gradient
\draw[wire] (wgrad.east) -- (wg.west);

% ---- Right outer loop: Weight Gradient down to Optimizer States (same as reference) ----
\coordinate (LoopR1) at ($(wg.east)+(0.3,0)$);
\coordinate (LoopR2) at ($(opt.east)+(0.3,0)$);

\draw[update] (opt.west) -- node[tag, above] {Update} (w.east);
\node[badge] at ($(opt.west)!0.55!(w.east) + (-0.25,3mm)$) {3};

\node[box] (prevbwd) at (0,0.3) {Prev layer\\Backward};
\draw[wire] (ing.north) --node[tag, right]{ $dX$ }(prevbwd.south);

\draw[wire] (wg.south) -- ++(0,-0.55) -| (opt.north);

\end{tikzpicture}
\caption{\textbf{BF16 End-to-End Training Dataflow with BF16 GEMMs and BF16 Optimizer States}
Forward/backward use BF16 activations and BF16 weights; GEMM partial sums accumulate in FP32; gradients and optimizer states are maintained in BF16.}
\end{figure}

\begin{figure}[H]
\begin{tikzpicture}[
  font=\small,
  >=Latex,
  % ---------- conference-style palette ----------
  box/.style={
    draw=NodeStroke, line width=0.7pt,
    rounded corners=2.5pt,
    fill=FillData,
    minimum height=10mm,
    minimum width=18mm,
    align=center
  },
  pbox/.style={box, fill=FillParam},
  kernel/.style={
    draw=NodeStroke, line width=0.7pt,
    rounded corners=10pt,
    fill=FillKernel,
    minimum height=12mm,
    minimum width=34mm,
    align=center
  },
  op/.style={
    draw=NodeStroke, line width=0.7pt,
    circle, fill=FillOp, minimum size=7mm, inner sep=0pt
  },
  wire/.style={-Latex, line width=0.8pt, draw=NodeStroke},
  bus/.style={-Latex, line width=0.8pt, draw=NodeStroke, rounded corners=4pt},
  update/.style={-Latex, dashed, line width=1.0pt, draw=NodeStroke},
  tag/.style={font=\itshape\footnotesize, text=TagColor},
  gtitle/.style={font=\bfseries, text=TitleColor},
  gradtag/.style={font=\bfseries\scriptsize, text=black!60},
  lane/.style={
    draw=black!22, line width=0.6pt,
    rounded corners=6pt,
    inner sep=7pt
  },
  badge/.style={
    circle,
    draw=black!55,
    fill=white,
    line width=0.6pt,
    inner sep=1.2pt,
    font=\bfseries\scriptsize,
    text=black!70
  },
  ktitle/.style={font=\bfseries\small, text=black!80},
  keq/.style={font=\scriptsize, text=black!80},
  knote/.style={font=\scriptsize\itshape, text=black!80}
]

% ------------------------------------------------------------
% NODE PLACEMENT (same layout)
% ------------------------------------------------------------
\node[box]    (inp)   at (0.0,  2.2) {Input\\[-1mm]\textbf{BF16}\\\scriptsize($X$)};
\node[kernel] (fprop) at (4.3,  2.2) {};
\node[box]    (out)   at (8.5,  2.2) {Output\\[-1mm]\textbf{BF16}\\\scriptsize($Y$)};
\node[kernel] (wgrad) at (11.6, 2.2) {};
\node[box]    (wg)    at (15.0, 2.2) {Weight\\Gradient\\[-1mm]\textbf{BF16}\\[-0.5mm]\scriptsize($dW$)};

\node[pbox]   (w)     at (4.3,  0.3) {Weight\\[-1mm]\textbf{BF16}\\[-0.5mm]\scriptsize($W$)};
\node[pbox]   (opt)   at (15.0, 0.3) {Optimizer\\States\\[-1mm]\textbf{BF16}};

\node[box]    (ing)   at (0.0, -1.5) {Input\\Gradient\\[-1mm]\textbf{BF16}\\[-0.5mm]\scriptsize($dX$)};
\node[kernel] (dgrad) at (4.3, -1.5) {};
\node[box]    (outg)  at (8.5,-1.5) {Output\\Gradient\\[-1mm]\textbf{BF16}\\[-0.5mm]\scriptsize($dY$)};

% ------------------------------------------------------------
% KERNEL CONTENT (equation + note)
% ------------------------------------------------------------
\node[ktitle] at ($(fprop.north)+(0,-2mm)$) {Forward GEMM};
\node[keq]    at ($(fprop.center)+(0,-0.5mm)$) {$Y = XW$};
\node[knote]  at ($(fprop.south)+(0,2.2mm)$) {FP8 GEMM, acc FP32};

\node[ktitle] at ($(wgrad.north)+(0,-2mm)$) {dW GEMM};
\node[keq]    at ($(wgrad.center)+(0,-0.5mm)$) {$dW = X^{\top} dY$};
\node[knote]  at ($(wgrad.south)+(0,2.2mm)$) {FP8 GEMM, acc FP32};

\node[ktitle] at ($(dgrad.north)+(0,-2mm)$) {dX GEMM};
\node[keq]    at ($(dgrad.center)+(0,-0.5mm)$) {$dX = dY\,W^{\top}$};
\node[knote]  at ($(dgrad.south)+(0,2.2mm)$) {FP8 GEMM, acc FP32};

% ------------------------------------------------------------
% LANES (background)
% ------------------------------------------------------------
\begin{pgfonlayer}{background}
  \node[lane, fill=LaneAct, fill opacity=0.90, fit=(inp)(out)(ing)(outg)] (laneA) {};
  \node[lane, fill=LaneKer, fill opacity=0.20, fit=(fprop)(wgrad)(dgrad)] (laneK) {};
  \node[lane, fill=LaneSt,  fill opacity=0.30, fit=(w)(opt)(wg)] (laneS) {};
\end{pgfonlayer}

\node[tag, anchor=west]       at ($(laneA.west)+(0,24pt)$) {Activations / grads};
\node[tag, anchor=south east] at ($(laneK.south east)+(-2pt,2pt)$) {GEMM kernels};
\node[tag, anchor=south]      at ($(laneS.south)+(-1,2pt)$) {Weights / optimizer state};

% ------------------------------------------------------------
% STEP BADGES
% ------------------------------------------------------------
\node[badge] at ($(fprop.north west)+(2mm,-2mm)$) {1};
\node[badge] at ($(wgrad.north west)+(2mm,-2mm)$) {2};
\node[badge] at ($(dgrad.north west)+(2mm,-2mm)$) {2};

% ------------------------------------------------------------
% WIRES
% ------------------------------------------------------------

% Forward
\draw[wire] (inp.east) -- node[tag, above] {To BF16} (fprop.west);

% Fprop -> Output
\draw[wire] (fprop.east) -- node[tag, above] {To BF16} (out.west);

% Weight -> Fprop (vertical up into kernel bottom)
\draw[bus] (w.north) -- node[tag, right] {To BF16} (fprop.south);

\draw[bus] (w.south) -- (dgrad.north);

% Backward: dY comes from next layer
% OutputGrad -> Dgrad (horizontal into kernel right)
\draw[wire] (outg.west) -- node[tag, above] {To BF16} (dgrad.east);

% Dgrad -> InputGrad (horizontal out)
\draw[wire] (dgrad.west) -- node[tag, above] {To BF16} (ing.east);

% dX to previous layer
\node[box] (prevbwd) at (0,0.3) {Prev layer\\Backward};
\draw[wire] (ing.north) -- node[tag, right] {$dX$} (prevbwd.south);

% Saved activations bus into dW kernel
\coordinate (TopL) at ($(inp.north)+(0,1.0)$);
\coordinate (TopR) at ($(wgrad.north)+(0,0.98)$);
\draw[bus] (inp.north) -- (TopL) -- node[tag, above] {BF16 activations} (TopR) -- (wgrad.north);

% dY into dW kernel (vertical route)
\coordinate (MidUp)   at ($(outg.north)+(0,2)$);
\coordinate (MidJoin) at ($(wgrad.south)+(0,-0.38)$);
\draw[bus] (outg.north) -- (MidUp) -- node[tag, above] {To BF16} (MidJoin) -- (wgrad.south);

% dW output
\draw[wire] (wgrad.east) -- (wg.west);

% dW feeds optimizer
\draw[wire] (wg.south) -- ++(0,-0.55) -| (opt.north);

% Update: optimizer -> weights (dashed, short, unambiguous)
\draw[update] (opt.west) -- node[tag, above] {Update} (w.east);
\node[badge] at ($(opt.west)!0.55!(w.east) + (-0.25,3mm)$) {3};

\end{tikzpicture}
\caption{\textbf{BF16 Training Dataflow with TE FP8 GEMMs and BF16 Optimizer States}
Activations, outputs, and gradients are stored in BF16. Transformer Engine casts GEMM operands to FP8 (delayed scaling) and accumulates partial sums in FP32 before returning BF16 results; optimizer states are maintained in BF16 and applied to update BF16 weights.}
\end{figure}

\begin{figure}[H]
\begin{tikzpicture}[
  font=\small,
  >=Latex,
  % --- palette + styling (same as your remembered figure) ---
  box/.style={
    draw=NodeStroke, line width=0.7pt,
    rounded corners=2.5pt,
    fill=FillData,
    minimum height=10mm,
    minimum width=18mm,
    align=center
  },
  pbox/.style={box, fill=FillParam},
  kernel/.style={
    draw=NodeStroke, line width=0.7pt,
    rounded corners=10pt,
    fill=FillKernel,
    minimum height=12mm,
    minimum width=34mm,
    align=center
  },
  op/.style={
    draw=NodeStroke, line width=0.7pt,
    circle, fill=FillOp, minimum size=7mm, inner sep=0pt
  },
  wire/.style={-Latex, line width=0.8pt, draw=NodeStroke},
  bus/.style={-Latex, line width=0.8pt, draw=NodeStroke, rounded corners=4pt},
  update/.style={-Latex, dashed, line width=1.0pt, draw=NodeStroke},
  tag/.style={font=\itshape\footnotesize, text=TagColor},
  gtitle/.style={font=\bfseries\large, text=TitleColor},
  lane/.style={
    draw=black!22, line width=0.6pt,
    rounded corners=6pt,
    inner sep=7pt
  },
  badge/.style={
    circle,
    draw=black!55,
    fill=white,
    line width=0.6pt,
    inner sep=1.2pt,
    font=\bfseries\scriptsize,
    text=black!70
  },
  % --- expected kernel text format ---
  ktitle/.style={font=\bfseries\small, text=black!80},
  keq/.style={font=\scriptsize, text=black!80},
  knote/.style={font=\scriptsize\itshape, text=black!80}
]

% ------------------------------------------------------------
% NODE PLACEMENT (same layout as your third figure)
% ------------------------------------------------------------
\node[box]    (inp)   at (0.0,  2.2) {Input\\[-1mm]\textbf{BF16}\\\scriptsize($X$)};
\node[kernel] (fprop) at (4.3,  2.2) {};
\node[box]    (out)   at (8.5,  2.2) {Output\\[-1mm]\textbf{BF16}\\\scriptsize($Y$)};
\node[kernel] (wgrad) at (11.6, 2.2) {};
\node[box]    (wg)    at (15.0, 2.2) {Weight\\Gradient\\[-1mm]\textbf{BF16}};

% BF16 compute copy (dequant from FP8 master each step)
\node[pbox]   (w)     at (4.3,  0.3) {Weight\\[-1mm]\textbf{BF16}\\[-0.5mm]\scriptsize(dequant)};

% True FP8 master (q,s)
\node[pbox]   (mw)    at (11.8, 0.3) {Master\\Weight\\[-1mm]\textbf{FP8 $q$}\\[-0.5mm]\scriptsize+$s$ (FP16)};

\node[pbox]   (opt)   at (15.0, 0.3) {Optimizer\\States\\[-1mm]\textbf{BF16}};

\node[box]    (ing)   at (0.0, -1.5) {Input\\Gradient\\[-1mm]\textbf{BF16}};
\node[kernel] (dgrad) at (4.3, -1.5) {};
\node[box]    (outg)  at (8.5,-1.5) {Output\\Gradient\\[-1mm]\textbf{BF16}};

% --- PREV LAYER BLOCK (restored) ---
\node[box] (prevbwd) at (0.0, 0.3) {Prev layer\\Backward};

% ------------------------------------------------------------
% KERNEL CONTENT (expected format: title + equation + note inside)
% ------------------------------------------------------------
\node[ktitle] at ($(fprop.north)+(0,-2mm)$) {Forward GEMM};
\node[keq]    at ($(fprop.center)+(0,-0.5mm)$) {$Y = XW$};
\node[knote]  at ($(fprop.south)+(0,2.2mm)$) {FP8 GEMM;\; acc FP32};

\node[ktitle] at ($(wgrad.north)+(0,-2mm)$) {dW GEMM};
\node[keq]    at ($(wgrad.center)+(0,-0.5mm)$) {$dW = X^{\top} dY$};
\node[knote]  at ($(wgrad.south)+(0,2.2mm)$) {FP8 GEMM;\; acc FP32};

\node[ktitle] at ($(dgrad.north)+(0,-2mm)$) {dX GEMM};
\node[keq]    at ($(dgrad.center)+(0,-0.5mm)$) {$dX = dY\,W^{\top}$};
\node[knote]  at ($(dgrad.south)+(0,2.2mm)$) {FP8 GEMM;\; acc FP32};

% ------------------------------------------------------------
% LANES (background) — same mechanism as remembered figure
% ------------------------------------------------------------
\begin{pgfonlayer}{background}
  \node[lane, fill=LaneAct, fill opacity=0.90, fit=(inp)(out)(ing)(outg)(prevbwd)] (laneA) {};
  \node[lane, fill=LaneKer, fill opacity=0.20, fit=(fprop)(wgrad)(dgrad)]         (laneK) {};
  \node[lane, fill=LaneSt,  fill opacity=0.30, fit=(w)(mw)(opt)(wg)]              (laneS) {};
\end{pgfonlayer}

\node[tag, anchor=west]       at ($(laneA.west)+(0,24pt)$) {Activations / grads};
\node[tag, anchor=south east] at ($(laneK.south east)+(-2pt,2pt)$) {GEMM kernels};
\node[tag, anchor=south]      at ($(laneS.south)+(0,2pt)$) {Weights / optimizer state};

% --- Step badges (kept) ---
\node[badge] at ($(fprop.north west)+(2mm,-2mm)$) {1};
\node[badge] at ($(wgrad.north west)+(2mm,-2mm)$) {2};
\node[badge] at ($(dgrad.north west)+(2mm,-2mm)$) {2};
\node[badge] at ($(mw.north west)+(2mm,-2mm)$) {3};

% ------------------------------------------------------------
% WIRES (keep ALL your wire tags)
% ------------------------------------------------------------

% Input -> Forward
\draw[wire] (inp.east) -- node[tag, above] {To BF16} (fprop.west);

% Forward -> Output
\draw[wire] (fprop.east) -- node[tag, above] {To BF16} (out.west);

% Weight -> Forward (BF16 compute copy)
\draw[bus] (w.north) -- node[tag, right] {To BF16} (fprop.south);

% Weight -> dX GEMM
\draw[bus] (w.south) -- (dgrad.north);

% dY -> dX GEMM
\draw[wire] (outg.west) -- node[tag, above] {To BF16} (dgrad.east);

% dX GEMM -> InputGrad
\draw[wire] (dgrad.west) -- node[tag, above] {To BF16} (ing.east);

% InputGrad -> Prev layer backward (restored connection)
\draw[wire] (ing.north) -- node[tag, right] {$dX$} (prevbwd.south);

% Activations bus: Input -> dW GEMM
\coordinate (TopL) at ($(inp.north)+(0,1.0)$);
\coordinate (TopR) at ($(wgrad.north)+(0,0.88)$);
\draw[bus] (inp.north) -- (TopL) -- node[tag, above] {BF16 activations} (TopR) -- (wgrad.north);

% dY up into dW GEMM
\coordinate (MidUp)   at ($(outg.north)+(0,2.16)$);
\coordinate (MidJoin) at ($(wgrad.south)+(0,-0.38)$);
\draw[bus] (outg.north) -- (MidUp) -- node[tag, above] {To BF16} (MidJoin) -- (wgrad.south);

% dW GEMM -> Weight Gradient
\draw[wire] (wgrad.east) -- (wg.west);

% Right loop to optimizer (keep same look)
\coordinate (LoopR1) at ($(wg.east)+(0.3,0)$);
\coordinate (LoopR2) at ($(opt.east)+(0.3,0)$);
\draw[bus] (wg.east) -- (LoopR1) -- (LoopR2) -- (opt.east);

% Explicit dW -> optimizer (keeps your “complete story”)
\draw[wire] (wg.south) -- ++(0,-0.55) -| (opt.north);

% Optimizer -> Master Weight (make this the dashed update edge, but KEEP the label)
\draw[update] (opt.west) -- node[tag, above] {To FP8} (mw.east);

% Master -> BF16 compute weight (dequant)
\draw[wire] (mw.west) -- node[tag, above] {To BF16} (w.east);

\end{tikzpicture}
\caption{\textbf{BF16 Training with TE FP8 GEMMs and True FP8-Master Weights}
Activations, outputs, and backpropagated gradients are stored in BF16. For the GEMM kernels, Transformer Engine casts BF16 operands to FP8 and accumulates partial sums in FP32 before returning BF16 outputs/gradients. For selected Linear/MLP weights, the only weight value is FP8 ($q$) with FP16 block scales ($s$); a BF16 compute copy is refreshed by dequantization each step.}

\label{fig:bf16-te-fp8-master} 
\end{figure}

\newpage
\section{Memory Comparison}
\begin{table}[h]
  \caption{Per-parameter memory accounting in bytes. BF16/FP32/FP8 use 2/4/1 bytes. \textbf{Persistent} counts master weights and optimizer states; \textbf{With gradient} additionally includes a BF16 gradient buffer. SR denotes stochastic rounding when writing updated weights to BF16.}
  \centering
  \small
  \begin{tabular}{lcccc}
  \toprule
  Method & Master weights & Optimizer states & Persistent & With gradient \\
  \midrule
  AdamW &
  $\text{FP32 master }(4)$ &
  $(m,v)\ \text{BF16 }(2\times2)$ &
  $4+4=8$ &
  $8+\text{BF16 }g\ (2)=10$ \\
  M+Adam (BF16 master) &
  $\text{BF16 }(2)$ &
  $(\mu_m,\nu_m,\nu_e)\ \text{BF16 }(3\times2)$ &
  $2+6=8$ &
  $8+\text{BF16 }g\ (2)=10$ \\
  M+Adam (FP8 master) &
  $\text{FP8 }(1)$ &
  $(\mu_m,\nu_m,\nu_e)\ \text{BF16 }(3\times2)$ &
  $1+6=7$ &
  $7+\text{BF16 }g\ (2)=9$ \\
  AdamW + SR (BF16) &
  $\text{BF16 }(2)$ &
  $(m,v)\ \text{BF16 }(2\times2)$ &
  $2+4=6$ &
  $6+\text{BF16 }g\ (2)=8$ \\
  \bottomrule
  \end{tabular}

  \label{tab:optimizer_memory}
\end{table} 

Table~\ref{tab:optimizer_memory} summarizes the per-parameter memory accounting. In its uncompressed version, \method\ replaces FP32 master weights with lower-precision weights, but introduces an additional optimizer state for the multiplicative branch. As a result, the BF16-master version has the same memory footprint as AdamW with FP32 master weights in this accounting, while the FP8-master version is slightly smaller. Thus, \method\ should not be interpreted as an optimizer-state memory-reduction method by itself. Its contribution is instead to improve the numerical stability of low-precision weight updates and reduce reliance on FP32 master weights. Our Apollo-E results in App.~\ref{app:Apollo-compression} further suggest that the added multiplicative state can be compressed independently, making optimizer-state compression complementary to the proposed additive--multiplicative update geometry.
% We acknowledge that, in its current form, \method\ consumes more memory than simply training with higher-precision weights. The contribution of this work is not memory reduction, but rather advancing our understanding of numerical stability and the quantization gaps that arise when accumulating gradients during pretraining. In production-scale settings, the additional optimizer-state overhead would be sharded across many GPUs via frameworks such as PyTorch FSDP \cite{pytorch_fsdp_docs} or DeepSpeed ZeRO-3 \cite{deepspeed_zero3_docs}, mitigating the per-device footprint. Furthermore, for long-context training, memory is often dominated by activation storage rather than optimizer states, making the relative impact of additional optimizer state less significant. Recent work has also introduced optimizers with reduced state requirements—Muon \cite{liu2025muon} retains only a single momentum buffer, while GaLore \cite{zhao2024galore_icml} maintains optimizer states in a low-rank subspace—and these techniques could potentially be combined with the mantissa-exponent decomposition to reduce overhead further. Thus, while our current implementation does not yield memory savings, it represents a step toward eliminating the reliance on FP32 master weights during training.

\section{Compute Resources and Runtime Overhead}
\label{app:compute-resources}

We report representative compute resources and wall-clock runtimes for the main \(1\times\) Chinchilla experiments in Tables~\ref{tab:madam_overhead} and~\ref{tab:sr_overhead}. All runs were executed on NVIDIA H200 GPUs using distributed training. The reported wall-clock time measures end-to-end training time for the corresponding final configuration, including forward and backward passes, optimizer updates, evaluation overhead, and logging overhead as used in our training pipeline. Per-update time is computed as wall-clock time divided by the number of optimizer steps.

The overhead numbers compare our current \method\ implementation against AdamW within the same training stack and kernel configuration. They should therefore be interpreted as implementation-level overheads rather than intrinsic lower bounds for the additive--multiplicative update rule. In particular, our current implementation simulates some low-precision storage paths, such as FP8/FP4 master weights, through explicit quantize/dequantize operations for compatibility with existing Transformer Engine kernels. These software-simulation steps can introduce additional overhead that would not necessarily remain in a fused or fully native low-precision implementation.

These measurements are intended to quantify the practical overhead of \method\ and AdamW+SR relative to AdamW in our implementation. They do not include the full research compute used for preliminary experiments, failed runs, debugging, or hyperparameter sweeps. Thus, the total project compute was larger than the compute required to reproduce the final reported configurations.

\begin{table*}[h]
\centering
\caption{\textbf{End-to-end runtime and per-update overhead of \method\ relative to AdamW at \(1\times\) Chinchilla.}
We report wall-clock time, seconds per optimizer update, relative overhead, and hardware for the final training configurations.}
\label{tab:madam_overhead}
\small
\setlength{\tabcolsep}{4pt}
\renewcommand{\arraystretch}{1.05}
\resizebox{\textwidth}{!}{%
\begin{tabular}{llrrrcccc}
\toprule
Model & Precision & Steps & AdamW wall-clock & \method\ wall-clock & AdamW s/upd & \method\ s/upd & Overhead & Hardware \\
\midrule
350M & FP8/FP8   & 60,000  & 5h 58m 39s   & 6h 23m 41s   & 0.359 & 0.384 &  6.98\% & 4$\times$H200 \\
130M & FP8/FP8   & 20,000  & 1h 10m 30s   & 1h 15m 28s   & 0.212 & 0.226 &  7.04\% & 2$\times$H200 \\
60M  & FP8/FP8   & 10,000  & 22m 17s      & 25m 03s      & 0.134 & 0.150 & 12.42\% & 2$\times$H200 \\
\midrule
350M & BF16/FP8  & 60,000  & 3h 35m 10s   & 4h 48m 26s   & 0.215 & 0.288 & 34.05\% & 4$\times$H200 \\
130M & BF16/FP8  & 20,000  & 1h 29m 55s   & 1h 41m 10s   & 0.270 & 0.304 & 12.51\% & 1$\times$H200 \\
60M  & BF16/FP8  & 10,000  & 25m 51s      & 30m 50s      & 0.155 & 0.185 & 19.28\% & 1$\times$H200 \\
\midrule
350M & BF16/BF16 & 60,000  & 3h 31m 38s   & 4h 44m 50s   & 0.212 & 0.285 & 34.59\% & 4$\times$H200 \\
130M & BF16/BF16 & 20,000  & 1h 30m 11s   & 1h 44m 20s   & 0.271 & 0.313 & 15.69\% & 1$\times$H200 \\
60M  & BF16/BF16 & 10,000  & 22m 38s      & 27m 16s      & 0.136 & 0.164 & 20.47\% & 1$\times$H200 \\
\midrule
1B   & BF16/FP8  & 100,000 & 108h 07m 42s & 117h 33m 16s & 3.893 & 4.232 &  8.72\% & 4$\times$H200 \\
1B   & BF16/BF16 & 100,000 & 103h 29m 15s & 109h 55m 08s & 3.726 & 3.957 &  6.21\% & 4$\times$H200 \\
\bottomrule
\end{tabular}%
}
\end{table*}

Table~\ref{tab:madam_overhead} shows that the overhead of \method\ depends on the precision regime and model size. The overhead is smallest in the FP8/FP8 and 1B settings, where the cost of the additional multiplicative branch is relatively small compared to the total training step. In contrast, the BF16/BF16 and BF16/FP8 settings at smaller scales show larger relative overheads, reflecting that the optimizer computation and extra state operations occupy a larger fraction of total runtime when the model is smaller or compute kernels are less dominant.

\begin{table*}[h]
\centering
\caption{\textbf{End-to-end runtime and per-update overhead of AdamW+SR relative to AdamW for BF16/BF16 training at \(1\times\) Chinchilla.}
We report wall-clock time, seconds per optimizer update, relative overhead, and hardware for the stochastic-rounding baseline.}
\label{tab:sr_overhead}
\small
\setlength{\tabcolsep}{4pt}
\renewcommand{\arraystretch}{1.05}
\resizebox{\textwidth}{!}{%
\begin{tabular}{lrrrcccc}
\toprule
Model & Steps & AdamW wall-clock & AdamW+SR wall-clock & AdamW s/upd & AdamW+SR s/upd & Overhead & Hardware \\
\midrule
350M & 60,000 & 3h 31m 38s & 5h 03m 57s & 0.212 & 0.304 & 43.62\% & 4$\times$H200 \\
130M & 20,000 & 1h 30m 11s & 1h 46m 25s & 0.271 & 0.319 & 18.00\% & 1$\times$H200 \\
60M  & 10,000 & 22m 38s    & 28m 09s    & 0.136 & 0.169 & 24.37\% & 1$\times$H200 \\
\bottomrule
\end{tabular}%
}
\end{table*}

Table~\ref{tab:sr_overhead} provides the corresponding runtime comparison for AdamW with stochastic rounding in the BF16/BF16 regime. In our implementation, AdamW+SR also introduces nontrivial overhead relative to standard AdamW. This comparison is useful because AdamW+SR is a strong additive baseline for mitigating BF16 round-to-nearest effects, but its runtime cost is not negligible. Together, Tables~\ref{tab:madam_overhead} and~\ref{tab:sr_overhead} show that \method\ improves low-precision optimization stability at the cost of additional runtime, while also making clear that stochastic-rounding baselines carry their own systems overhead.

\section{Apollo-E Compression for the Multiplicative State}
\label{app:Apollo-compression}

\method\ introduces one additional optimizer state beyond AdamW: the second-moment accumulator used by the multiplicative branch. This state improves the stability of exponent-style updates, but also increases optimizer-state memory. To test whether this overhead is intrinsic to the method, we evaluate Apollo-E, a compressed variant that replaces only this extra multiplicative second-moment state with an Apollo-style approximation. The additive AdamW-style branch is unchanged.

Apollo-E compresses the multiplicative-state normalizer rather than changing the additive--multiplicative update rule. We evaluate two modes: mini, which uses the most aggressive tensor-wise compression, and channel, which uses a channel-wise compressed representation with rank controlling the auxiliary dimension. Larger ranks store slightly more state but can better approximate the dense multiplicative normalizer.

Table~\ref{tab:Apollo-full} reports optimizer-state memory in bytes per trainable parameter. These numbers count tensors stored in the optimizer state only; they do not include model weights, gradients, activations, temporary buffers, CUDA allocator effects, or communication buffers. Thus, the table should be interpreted as optimizer-state accounting rather than measured peak GPU memory.

\begin{table*}[h]
\centering
\small
\caption{\textbf{Apollo-E compression of the additional multiplicative state.}
Apollo-E compresses only the extra multiplicative second-moment state introduced by \method. Bytes/param counts optimizer-state tensors only and is not measured peak memory.}
\label{tab:Apollo-full}
\setlength{\tabcolsep}{4pt}
\renewcommand{\arraystretch}{1.05}
\begin{tabular}{@{}llcc@{}}
\toprule
\textbf{Model} & \textbf{Optimizer / state} & \textbf{Bytes/param} & \textbf{PPL} \\
\midrule
130M & AdamW, BF16 state
& 4.000 & 23.587 \\
130M & AdamW, FP32 state
& 8.000 & 22.615 \\
130M & Apollo-E \method, mini, rank 1
& 4.739 & 22.126 \\
130M & Apollo-E \method, channel, rank 1
& 4.739 & 22.085 \\
130M & Apollo-E \method, channel, rank 4
& 4.754 & 21.979 \\
130M & Apollo-E \method, channel, rank 8
& 4.775 & 21.997 \\
130M & Apollo-E \method, channel, rank 16
& 4.817 & 21.995 \\
\midrule
350M & AdamW, BF16 state
& 4.000 & 19.156 \\
350M & AdamW, FP32 state
& 8.000 & 15.955 \\
350M & Apollo-E \method, mini, rank 1
& 4.362 & 16.752 \\
350M & Apollo-E \method, channel, rank 1
& 4.362 & 16.683 \\
350M & Apollo-E \method, channel, rank 4
& 4.377 & 16.550 \\
350M & Apollo-E \method, channel, rank 8
& 4.397 & 16.529 \\
350M & Apollo-E \method, channel, rank 16
& 4.438 & 16.576 \\
\bottomrule
\end{tabular}
\end{table*}

Apollo-E reduces the optimizer-state overhead of \method\ to a small increase over BF16-state AdamW while preserving most of the optimization gain. For 130M models, Apollo-E uses roughly \(4.74\)--\(4.82\) bytes per parameter, substantially below FP32-state AdamW, while improving perplexity over both BF16-state and FP32-state AdamW. For 350M models, Apollo-E uses only \(4.36\)--\(4.44\) bytes per parameter and remains much closer to full \method\ performance than to BF16 AdamW. These results show that the additional multiplicative state is not fundamental memory overhead: it can be compressed independently of the additive--multiplicative update geometry.

\section{Hyperparameter Ablation} \label{abalations} 
We report hyperparameter sweep results for the optimizers studied in the main text. Each table is organized as follows: the first row lists the approximately best configuration found in the sweep, and the subsequent rows report one-dimensional ablations that vary a single hyperparameter while holding all others fixed at this reference setting. Reported losses are the final C4/EN validation losses. We consider four precision regimes throughout: \textbf{(i) BF16 weights + BF16 compute}, where model parameters, activations, and optimizer state are stored in BF16; \textbf{(ii) BF16 weights + FP8 compute}, where parameters are maintained in BF16 master weights while Transformer Engine executes GEMMs in FP8; \textbf{(iii) FP8 weights + FP8 compute}, where parameters are stored in FP8 master weights and Transformer Engine executes GEMMs in FP8; and \textbf{(iv) FP32 weights + TE32 compute}, where parameters are stored in FP32 and Transformer Engine executes GEMMs in TE32.\\
\paragraph{FP4 block-scale branch weighting.}
For block-scaled FP4/NVFP4 master weights, a dequantized weight in block $b$ can be written approximately as
$w_i \approx q_i s_b$,
where $q_i$ is the FP4 payload value and $s_b$ is the block scale. Since $s_b$ determines the absolute spacing of the FP4 grid in weight space, it provides a simple time-varying signal for format-aware branch weighting. However, this scale has different meanings for the two M+Adam branches. For the additive branch, moving from payload value $q_i$ to a neighboring value $q_i^+$ requires an absolute displacement proportional to
$|q_i^+ s_b - q_i s_b| = s_b |q_i^+ - q_i|$,
so a block-scale-aware correction can be interpreted as compensating for the absolute resolution of the additive grid. In contrast, the multiplicative branch operates in relative/log-weight space. The relevant adjacent-grid ratio is $\frac{|q_i^+ s_b|}{|q_i s_b|} = \frac{|q_i^+|}{|q_i|}$,
so the block scale cancels. Therefore, in the final implementation we keep the multiplicative branch controlled by a global branch scale rather than directly modulating its learning rate by the local FP4 block scale. Empirically, in a 60M NVFP4 diagnostic sweep, the simple block-scale learning-rate correction did not improve over the corresponding FP4 true-master baseline, so we treat this as an implementation diagnostic rather than an additional component of M+Adam.

\subsection{Sweeping Results for AdamW (BF16 weights + BF16 compute)}
\begin{table}[H]
  \centering
  \caption{Hyperparameter ablation for AdamW on 60M on 1$\times$ Chinchilla Data (BF16 weights + BF16 compute)}
  \label{tab:adamw_bf16_60m_1x_ablation}
  \small
  \setlength{\tabcolsep}{4pt}
  \renewcommand{\arraystretch}{1.05}
  % [inline block 0: 61 envs, 152219 chars in 43 pieces, piece 1 here, a bare % at each other -> data_tex | \begin{tabular}{ccccccccc}     \toprule...]

\end{table}

\begin{table}[H]
  \centering
  \caption{Hyperparameter ablation for AdamW on 60M on 2$\times$ Chinchilla Data (BF16 weights + BF16 compute)}
  \label{tab:adamw_bf16_60m_2x_ablation}
  \small
  \setlength{\tabcolsep}{4pt}
  \renewcommand{\arraystretch}{1.05}
  %
\end{table}

\begin{table}[H]
  \centering
  \caption{Hyperparameter ablation for AdamW on 60M on 4$\times$ Chinchilla Data (BF16 weights + BF16 compute)}
  \label{tab:adamw_bf16_60m_4x_ablation}
  \small
  \setlength{\tabcolsep}{4pt}
  \renewcommand{\arraystretch}{1.05}
  %
\end{table}

\begin{table}[H]
  \centering
  \caption{Hyperparameter ablation for AdamW on 60M on 8$\times$ Chinchilla Data (BF16 weights + BF16 compute)}
  \label{tab:adamw_bf16_60m_8x_ablation}
  \small
  \setlength{\tabcolsep}{4pt}
  \renewcommand{\arraystretch}{1.05}
  %
\end{table}

\subsection{Sweeping Results for \method (BF16 weights + BF16 compute)}
\begin{table}[H]
  \centering
  \caption{Hyperparameter ablation for M+Adam on 60M on 1$\times$ Chinchilla Data (BF16 weights + BF16 compute)}
  \label{tab:madam_bf16_60m_1x_ablation}
  \small
  \setlength{\tabcolsep}{4pt}
  \renewcommand{\arraystretch}{1.05}
  %
\end{table}

\begin{table}[H]
  \centering
  \caption{Hyperparameter ablation for M+Adam on 60M on 2$\times$ Chinchilla Data (BF16 weights + BF16 compute)}
  \label{tab:madam_bf16_60m_2x_ablation}
  \small
  \setlength{\tabcolsep}{4pt}
  \renewcommand{\arraystretch}{1.05}
  %
\end{table}

\begin{table}[H]
  \centering
  \caption{Hyperparameter ablation for M+Adam on 60M on 4$\times$ Chinchilla Data (BF16 weights + BF16 compute)}
  \label{tab:madam_bf16_60m_4x_ablation}
  \small
  \setlength{\tabcolsep}{4pt}
  \renewcommand{\arraystretch}{1.05}
  %
\end{table}

\begin{table}[H]
  \centering
  \caption{Hyperparameter ablation for M+Adam on 60M on 8$\times$ Chinchilla Data (BF16 weights + BF16 compute)}
  \label{tab:madam_bf16_60m_8x_ablation}
  \small
  \setlength{\tabcolsep}{4pt}
  \renewcommand{\arraystretch}{1.05}
  %
\end{table}

\subsection{Sweeping Results for AdamW (BF16 weights + FP8 compute)}
\begin{table}[H]
  \centering
  \caption{Hyperparameter ablation for AdamW on 60M on 1$\times$ Chinchilla Data (BF16 weights + FP8 compute)}
  \label{tab:adamw_bf16fp8_60m_1x_ablation}
  \small
  \setlength{\tabcolsep}{4pt}
  \renewcommand{\arraystretch}{1.05}
  %
\end{table}

\begin{table}[H]
  \centering
  \caption{Hyperparameter ablation for AdamW on 60M on 2$\times$ Chinchilla Data (BF16 weights + FP8 compute)}
  \label{tab:adamw_bf16fp8_60m_2x_ablation}
  \small
  \setlength{\tabcolsep}{4pt}
  \renewcommand{\arraystretch}{1.05}
  %
\end{table}

\begin{table}[H]
  \centering
  \caption{Hyperparameter ablation for AdamW on 60M on 4$\times$ Chinchilla Data (BF16 weights + FP8 compute)}
  \label{tab:adamw_bf16fp8_60m_4x_ablation}
  \small
  \setlength{\tabcolsep}{4pt}
  \renewcommand{\arraystretch}{1.05}
  %
\end{table}

\begin{table}[H]
  \centering
  \caption{Hyperparameter ablation for AdamW on 60M on 8$\times$ Chinchilla Data (BF16 weights + FP8 compute)}
  \label{tab:adamw_bf16fp8_60m_8x_ablation}
  \small
  \setlength{\tabcolsep}{4pt}
  \renewcommand{\arraystretch}{1.05}
  %
\end{table}

\subsection{Sweeping Results for \method (BF16 weights + FP8 compute)}
\begin{table}[H]
  \centering
  \caption{Hyperparameter ablation for M+Adam on 60M on 1$\times$ Chinchilla Data (BF16 weights + FP8 compute)}
  \label{tab:madam_bf16fp8_60m_1x_ablation}
  \small
  \setlength{\tabcolsep}{4pt}
  \renewcommand{\arraystretch}{1.05}
  %
\end{table}

\begin{table}[H]
  \centering
  \caption{Hyperparameter ablation for M+Adam on 60M on 2$\times$ Chinchilla Data (BF16 weights + FP8 compute)}
  \label{tab:madam_bf16fp8_60m_2x_ablation}
  \small
  \setlength{\tabcolsep}{4pt}
  \renewcommand{\arraystretch}{1.05}
  %
\end{table}

\begin{table}[H]
  \centering
  \caption{Hyperparameter ablation for M+Adam on 60M on 4$\times$ Chinchilla Data (BF16 weights + FP8 compute)}
  \label{tab:madam_bf16fp8_60m_4x_ablation}
  \small
  \setlength{\tabcolsep}{4pt}
  \renewcommand{\arraystretch}{1.05}
  %
\end{table}

\begin{table}[H]
  \centering
  \caption{Hyperparameter ablation for M+Adam on 60M on 8$\times$ Chinchilla Data (BF16 weights + FP8 compute)}
  \label{tab:madam_bf16fp8_60m_8x_ablation}
  \small
  \setlength{\tabcolsep}{4pt}
  \renewcommand{\arraystretch}{1.05}
  %
\end{table}

\subsection{Hyperparameter Ablation in Phase II}

% ----------------------------- BF16 AdamW, 130M, 2x -----------------------------
\begin{table}[H]
  \centering
  \caption{Hyperparameter ablation for AdamW on 130M on 2$\times$ Chinchilla Data (BF16 weights + BF16 compute)}
  \label{tab:adamw_bf16_130m_2x_ablation}
  \small
  \setlength{\tabcolsep}{4pt}
  \renewcommand{\arraystretch}{1.05}
  %
\end{table}

% ----------------------------- BF16 M+Adam, 130M, 2x -----------------------------
\begin{table}[H]
  \centering
  \caption{Hyperparameter ablation for M+Adam on 130M on 2$\times$ Chinchilla Data (BF16 weights + BF16 compute)}
  \label{tab:madam_bf16_130m_2x_ablation}
  \small
  \setlength{\tabcolsep}{4pt}
  \renewcommand{\arraystretch}{1.05}
  %
\end{table}

% ----------------------------- BF16 M+Adam, 130M, 4x -----------------------------
\begin{table}[H]
  \centering
  \caption{Hyperparameter ablation for M+Adam on 130M on 4$\times$ Chinchilla Data (BF16 weights + BF16 compute)}
  \label{tab:madam_bf16_130m_4x_ablation}
  \small
  \setlength{\tabcolsep}{4pt}
  \renewcommand{\arraystretch}{1.05}
  %
\end{table}

% ----------------------------- BF16 M+Adam, 130M, 8x -----------------------------
\begin{table}[H]
  \centering
  \caption{Hyperparameter ablation for M+Adam on 130M on 8$\times$ Chinchilla Data (BF16 weights + BF16 compute)}
  \label{tab:madam_bf16_130m_8x_ablation}
  \small
  \setlength{\tabcolsep}{4pt}
  \renewcommand{\arraystretch}{1.05}
  %
\end{table}

% ------------------------------ AdamW 4x ------------------------------
\begin{table}[H]
  \centering
  \caption{Hyperparameter ablation for AdamW on 350M on 4$\times$ Chinchilla Data (BF16 weights + BF16 compute)}
  \label{tab:adamw_bf16_350m_4x_ablation}
  \small
  \setlength{\tabcolsep}{4pt}
  \renewcommand{\arraystretch}{1.05}
  %
\end{table}

% ------------------------------ AdamW 8x ------------------------------
\begin{table}[H]
  \centering
  \caption{Hyperparameter ablation for AdamW on 350M on 8$\times$ Chinchilla Data (BF16 weights + BF16 compute)}
  \label{tab:adamw_bf16_350m_8x_ablation}
  \small
  \setlength{\tabcolsep}{4pt}
  \renewcommand{\arraystretch}{1.05}
  %
\end{table}

% ------------------------------ M+Adam 2x ------------------------------
\begin{table}[H]
  \centering
  \caption{Hyperparameter ablation for M{+}Adam on 350M on 2$\times$ Chinchilla Data (BF16 weights + BF16 compute)}
  \label{tab:madam_bf16_350m_2x_ablation}
  \small
  \setlength{\tabcolsep}{4pt}
  \renewcommand{\arraystretch}{1.05}
  %
\end{table}

% ------------------------------ M+Adam 4x ------------------------------
\begin{table}[H]
  \centering
  \caption{Hyperparameter ablation for M{+}Adam on 350M on 4$\times$ Chinchilla Data (BF16 weights + BF16 compute)}
  \label{tab:madam_bf16_350m_4x_ablation}
  \small
  \setlength{\tabcolsep}{4pt}
  \renewcommand{\arraystretch}{1.05}
  %
\end{table}

% ------------------------------ M+Adam 8x ------------------------------
\begin{table}[H]
  \centering
  \caption{Hyperparameter ablation for M{+}Adam on 350M on 8$\times$ Chinchilla Data (BF16 weights + BF16 compute)}
  \label{tab:madam_bf16_350m_8x_ablation}
  \small
  \setlength{\tabcolsep}{4pt}
  \renewcommand{\arraystretch}{1.05}
  %
\end{table}

% =========================================================
% AdamW, 130M, 8x (BF16 weights + FP8 compute)
% =========================================================
\begin{table}[H]
  \centering
  \caption{Hyperparameter ablation for AdamW on 130M on 8$\times$ Chinchilla Data (BF16 weights + FP8 compute)}
  \label{tab:adamw_bf16fp8_130m_8x_ablation}
  \small
  \setlength{\tabcolsep}{4pt}
  \renewcommand{\arraystretch}{1.05}
  %
\end{table}

% Requires: \usepackage{float} for [H]
% Assumes: \wandbidx{<idx>}{<runid>} is defined in your preamble.

% =========================================================
% M+Adam, 130M, 2x (BF16 weights + FP8 compute)
% =========================================================
\begin{table}[H]
  \centering
  \caption{Hyperparameter ablation for M+Adam on 130M on 2$\times$ Chinchilla Data (BF16 weights + FP8 compute)}
  \label{tab:madam_bf16fp8_130m_2x_ablation}
  \small
  \setlength{\tabcolsep}{4pt}
  \renewcommand{\arraystretch}{1.05}
  %
\end{table}

% =========================================================
% M+Adam, 130M, 4x (BF16 weights + FP8 compute)
% =========================================================
\begin{table}[H]
  \centering
  \caption{Hyperparameter ablation for M+Adam on 130M on 4$\times$ Chinchilla Data (BF16 weights + FP8 compute)}
  \label{tab:madam_bf16fp8_130m_4x_ablation}
  \small
  \setlength{\tabcolsep}{4pt}
  \renewcommand{\arraystretch}{1.05}
  %
\end{table}

% =========================================================
% M+Adam, 130M, 8x (BF16 weights + FP8 compute)
% =========================================================
\begin{table}[H]
  \centering
  \caption{Hyperparameter ablation for M+Adam on 130M on 8$\times$ Chinchilla Data (BF16 weights + FP8 compute)}
  \label{tab:madam_bf16fp8_130m_8x_ablation}
  \small
  \setlength{\tabcolsep}{4pt}
  \renewcommand{\arraystretch}{1.05}
  %
\end{table}

\begin{table}[H]
  \centering
  \caption{Hyperparameter ablation for M{+}Adam on 350M on 2$\times$ Chinchilla Data (BF16 weights + FP8 compute)}
  \label{tab:madam_bf16fp8_350m_2x_ablation}
  \small
  \setlength{\tabcolsep}{4pt}
  \renewcommand{\arraystretch}{1.05}
  %
\end{table}

\begin{table}[H]
  \centering
  \caption{Hyperparameter ablation for M{+}Adam on 350M on 4$\times$ Chinchilla Data (BF16 weights + FP8 compute)}
  \label{tab:madam_bf16fp8_350m_4x_ablation}
  \small
  \setlength{\tabcolsep}{4pt}
  \renewcommand{\arraystretch}{1.05}
  %
\end{table}

\begin{table}[H]
  \centering
  \caption{Hyperparameter ablation for M{+}Adam on 350M on 8$\times$ Chinchilla Data (BF16 weights + FP8 compute)}
  \label{tab:madam_bf16fp8_350m_8x_ablation}
  \small
  \setlength{\tabcolsep}{4pt}
  \renewcommand{\arraystretch}{1.05}
  %
\end{table}

\subsection{Hyperparameter Ablation of FP8 master weights with FP8 compute}

\begin{table}[H]
  \centering
  \caption{Hyperparameter ablation for AdamW on 60M on 1$\times$ Chinchilla Data (FP8 weights + FP8 compute)}
  \label{tab:adamw_bf16fp8tm_60m_1x_ablation}
  \small
  \setlength{\tabcolsep}{4pt}
  \renewcommand{\arraystretch}{1.05}
  %
\end{table}

\begin{table}[H]
  \centering
  \caption{Hyperparameter ablation for AdamW on 130M on 1$\times$ Chinchilla Data (FP8 weights + FP8 compute)}
  \label{tab:adamw_bf16fp8tm_130m_1x_ablation}
  \small
  \setlength{\tabcolsep}{4pt}
  \renewcommand{\arraystretch}{1.05}
  %
\end{table}

\begin{table}[H]
  \centering
  \caption{Hyperparameter ablation for AdamW on 350M on 1$\times$ Chinchilla Data (FP8 weights + FP8 compute)}
  \label{tab:adamw_bf16fp8_350m_1x_ablation}
  \small
  \setlength{\tabcolsep}{4pt}
  \renewcommand{\arraystretch}{1.05}
  %
\end{table}

\begin{table}[H]
  \centering
  \caption{Hyperparameter ablation for M+Adam on 60M on 1$\times$ Chinchilla Data (FP8 weights + FP8 compute)}
  \label{tab:madam_fp8fp8_60m_1x_ablation}
  \small
  \setlength{\tabcolsep}{4pt}
  \renewcommand{\arraystretch}{1.05}
  %
\end{table}

\begin{table}[H]
  \centering
  \caption{Hyperparameter ablation for M+Adam on 60M on 4$\times$ Chinchilla Data (FP8 weights + FP8 compute)}
  \label{tab:madam_fp8fp8_60m_4x_ablation}
  \small
  \setlength{\tabcolsep}{4pt}
  \renewcommand{\arraystretch}{1.05}
  %
\end{table}

\begin{table}[H]
  \centering
  \caption{Hyperparameter ablation for M+Adam on 60M on 8$\times$ Chinchilla Data (FP8 weights + FP8 compute)}
  \label{tab:madam_fp8fp8_60m_8x_ablation}
  \small
  \setlength{\tabcolsep}{4pt}
  \renewcommand{\arraystretch}{1.05}
  %
\end{table}

\begin{table}[H]
  \centering
  \caption{Hyperparameter ablation for AdamW on 60M on 1$\times$ Chinchilla Data (FP32 weights + TF32 compute)}
  \label{tab:adamw_fp32_60m_1x_ablation}
  \small
  \setlength{\tabcolsep}{4pt}
  \renewcommand{\arraystretch}{1.05}
  %
\end{table}

\begin{table}[H]
  \centering
  \caption{Hyperparameter ablation for AdamW on 130M on 1$\times$ Chinchilla Data (FP32 weights + TF32 compute)}
  \label{tab:adamw_fp32_130m_1x_ablation}
  \small
  \setlength{\tabcolsep}{4pt}
  \renewcommand{\arraystretch}{1.05}
  %
\end{table}

% ----------------------------- AdamW + SR, 60M, 2x -----------------------------

\begin{table}[H]
  \centering
  \caption{Hyperparameter ablation for AdamW + SR on 60M on 2$\times$ Chinchilla Data (BF16 weights + BF16 compute)}
  \label{tab:adamwsr_bf16_60m_2x_ablation}
  \small
  \setlength{\tabcolsep}{4pt}
  \renewcommand{\arraystretch}{1.05}
  %
\end{table}

% ----------------------------- AdamW + SR, 60M, 4x -----------------------------

\begin{table}[H]
  \centering
  \caption{Hyperparameter ablation for AdamW + SR on 60M on 4$\times$ Chinchilla Data (BF16 weights + BF16 compute)}
  \label{tab:adamwsr_bf16_60m_4x_ablation}
  \small
  \setlength{\tabcolsep}{4pt}
  \renewcommand{\arraystretch}{1.05}
  %
\end{table}

% ----------------------------- AdamW + SR, 60M, 8x -----------------------------

\begin{table}[H]
  \centering
  \caption{Hyperparameter ablation for AdamW + SR on 60M on 8$\times$ Chinchilla Data (BF16 weights + BF16 compute)}
  \label{tab:adamwsr_bf16_60m_8x_ablation}
  \small
  \setlength{\tabcolsep}{4pt}
  \renewcommand{\arraystretch}{1.05}
  %
\end{table}

\end{document}